\newcommand{\R}{\mathbb{R}}
\newcommand{\E}{\mathbb{E}}
\DeclareMathOperator{\var}{Var}
\newcommand{\assmcore}[1]{{#1}^*}
\newcommand{\stimcore}[1]{S_{#1}}
\newcommand{\support}[1]{\bar{#1}}
\title[Learning with Assemblies of Neurons]{Assemblies of Neurons Learn to Classify Well-Separated Distributions}
\begin{document}

\maketitle

\begin{abstract}%
    \noindent An assembly is a large population of neurons whose synchronous firing is hypothesized to represent a memory, concept, word, and other cognitive categories.  Assemblies are believed to provide a bridge between high-level cognitive phenomena and low-level neural activity. Recently, a computational system called the \emph{Assembly Calculus} (AC), with a repertoire of biologically plausible operations on assemblies, has been shown capable of simulating arbitrary space-bounded computation, but also of simulating complex cognitive phenomena such as language, reasoning, and planning.  However, the mechanism whereby assemblies can mediate {\em learning} has not been known.  Here we present such a mechanism, and prove rigorously that, for simple classification problems defined on distributions of labeled assemblies, a new assembly representing each class can be reliably formed in response to a few stimuli from the class; this assembly is henceforth reliably recalled in response to new stimuli from the same class.  Furthermore, such class assemblies will be distinguishable as long as the respective classes are reasonably separated --- for example, when they are clusters of similar assemblies, or more generally separable with margin by a linear threshold function. To prove these results, we draw on random graph theory with dynamic edge weights to estimate sequences of activated vertices, yielding strong generalizations of previous calculations and theorems in this field over the past five years. These theorems are backed up by experiments demonstrating the successful formation of assemblies which represent concept classes on synthetic data drawn from such distributions, and also on MNIST, which lends itself to classification through one assembly per digit. Seen as a learning algorithm, this mechanism is entirely online, generalizes from very few samples, and requires only mild supervision --- all key attributes of learning in a model of the brain.  We argue that this learning mechanism, supported by separate sensory pre-processing mechanisms for extracting attributes, such as edges or phonemes, from real world data, can be the basis of biological learning in cortex. 
\end{abstract}

\begin{keywords}%
  List of keywords%
\end{keywords}

\section{Introduction}

The brain has been a productive source of inspiration for AI, from the perceptron and the neocognitron to deep neural nets. Machine learning has since advanced to dizzying heights of analytical understanding and practical success, but the study of the brain has lagged behind in one important dimension: After half a century of intensive effort by neuroscientists (both computational and experimental), and despite great advances in our understanding of the brain at the level of neurons, synapses, and neural circuits, we still have no plausible mechanism for explaining intelligence, that is, the brain's performance in planning, decision-making, language, etc. As Nobel laureate Richard Axel put it, ``we have no logic for translating neural activity into thought and action'' \citep{AxelNeuron2018}.

Recently, a high-level computational framework was developed with the explicit goal to fill this gap: the Assembly Calculus (AC)  \citep{papadimitriou2020brain}, a computational model whose basic data type is the {\em assembly of neurons}.  Assemblies, called ``the alphabet of the brain'' \citep{buzsaki2019brain}, are large sets of neurons whose simultaneous excitation is tantamount to the subject's thinking of an object, idea, episode, or word (see \citet{Josh2016}). Dating back to the birth of neuroscience, the ``million-fold democracy'' by which groups of neurons act collectively without central control was first proposed by \citet{sherrington1906integrative} and was the empirical phenomenon that Hebb attempted to explain with his theory of plasticity \citep{hebb1949}. Assemblies are initially created to record memories of external stimuli \citep{quiroga2016neuronal}, and are believed to be subsequently recalled, copied, altered, and manipulated in the non-sensory brain \citep{Josh2012, Buzsaki:10}.  The Assembly Calculus provides a repertoire of operations for such manipulation, namely {\tt project, reciprocal-project, associate, pattern-complete,} and {\tt merge} encompassing a complete computational system.  Since the Assembly Calculus is, to our knowledge, the only extant computational system whose purpose is to bridge the gap identified by Axel in the above quote \citep{AxelNeuron2018}, it is of great interest to establish that complex cognitive functions can be plausibly expressed in it.  Indeed, significance progress has been made over the past year, see for example \citet{mitropolsky2021biologically} for a parser of English and  \citet{damore2021planning} for a program mediating planning in the blocks world, both written in the AC programming system.  Yet despite these recent advances, one fundamental question is left unanswered: If the Assembly Calculus is a meaningful abstraction of cognition and intelligence, why does it not have a {\tt learn} command?  {\em How can the brain learn through  assembly representations?}  
%We should clarify that, since assemblies are believed to mediate cognitive phenomena in the {\em non-sensory brain}, we seek a higher conception of learning than the kind that seems to take place in the visual cortex, for example, or the one-shot learning that happens in the hippocampus.

This is the question addressed and answered in this paper. As assembly operations are a new learning framework and device, one has to start from the most basic questions: Can this model classify assembly-encoded stimuli that are separated through clustering, or by half spaces? Recall that learning linear thresholds is a theoretical cornerstone of supervised learning, leading to a legion of fundamental algorithms: Perceptron, Winnow, multiplicative weights, isotron, kernels and SVMs, and many variants of gradient descent. 

Following \citet{papadimitriou2020brain}, we model the brain as a directed graph of excitatory neurons with dynamic edge weights (due to plasticity).  The brain is subdivided into {\em areas}, for simplicity each containing $n$ neurons connected through a $G_{n,p}$ random directed graph \citep{erdHos1960evolution}. Certain ordered pairs of areas are also connected, through random bipartite graphs.  We assume that neurons fire in discrete time steps. At each time step, each neuron in a brain area will fire 
%after being stimulated by the firing of its neighbors during the previous step. Specifically, at step $t$, the {\em synaptic input} to a neuron $i$ is a sum over all of its incoming synaptic connections $(j,i)$ where neuron $j$ fired on step $t-1$, weighted by the synaptic strength $w_{ji}$. A neuron will actually fire 
if its synaptic input from the firings of the previous step is among the top $k$ highest out of the $n$ neurons in its brain area.  This selection process is called {\em $k$-cap}, and is an abstraction of the process of {\em inhibition} in the brain, in which a separate population of inhibitory neurons is induced to fire by the firing of excitatory neurons in the area, and through negatively-weighted connections prevents all but the most stimulated excitatory neurons from firing. Synaptic weights are altered via Hebbian plasticity and homeostasis (see Section 2 for a full description).  In this stylized mathematical model, reasonably consistent with what is known about the brain, it has been shown that the operations of the Assembly Calculus converge and work as specified (with high probability relative to the underlying random graphs). These results have also been replicated by simulations in the model above, and also in more biologically realistic networks of spiking neurons (see \citet{LMPV,papadimitriou2020brain}). {\em In this paper  we develop, in the same model, mechanisms for learning to classify well-separated classes of stimuli}, including clustered distributions and linear threshold functions with margin. Moreover, considering that the ability to learn from few examples, and with mild supervision, are crucial characteristics of any brain-like learning algorithm, we show that learning with assemblies does both quite naturally.

\section{A mathematical model of the brain} 
Here we outline the basics of the model in \citet{papadimitriou2020brain}.  There are a finite number $a$ of brain areas, denoted $X, Y, \ldots$ (but in this paper, we will only need one brain area where learning happens, plus another area where the stimuli are presented).  Each area is a random directed graph with $n$ nodes called {\em neurons} with each directed edge present independently with probability $p$ (for simplicity, we take $n$ and $p$ to be the same across areas).  Some ordered pairs of brain areas are also connected by random bipartite graphs, with the same connection probability $p$.  Importantly, each area may be {\em inhibited}, which means that its neurons cannot fire; the status of the areas is determined by explicit {\tt inhibit/disinhibit} commands of the AC. \footnote{The brain's neuromodulatory systems \citep{jones2003arousal, harris2011cortical} are plausible candidates to implement these mechanisms.} This defines a large random graph $G=(N,E)$ with $|N|=an$ nodes and a number $|E|$ of directed edges which is in expectation $(a+b)pn^2-apn$, where $b$ is the number of pairs of areas that are connected. Each edge $(i,j)\in E$ in this graph, called a {\em synapse}, has a dynamic non-negative weight $w_{ij}(t)$, initially $1$. 
%The {\em sensory area} is a special area, without interconnectivity, used for the presentation of {\em stimuli} --- intuitively, input training and test data. Connections between areas (including from sensory areas) are modeled as random bipartite graphs. For simplicity, we use the same number of neurons $n$ in each area, and the same edge probability $p$ for all types of edges. Firing is regulated through the $k$-cap operation, so that exactly $k$ neurons will fire on a given round. 

This framework gives rise to a discrete-time dynamical system, as follows: The {\em state} of the system at any time step $t$ consists of (a) a bit for each area $X$, inh$(X,t)$, initially $0$, denoting whether the area is inhibited; (b) a bit for each neuron $i$, fires$(i,t)$, denoting whether $i$ spikes at time $t$ (zero if the area $X$ of $i$ has inh$(X,t)=1$); and (c) the weights of all synapses $w_{ij}(t)$, initially one.  

The state transition of the dynamical system is as follows: For each neuron $i$ in area $A$ with inh$(X,t+1)=0$ (see the next paragraph for how inh$(X,t+1)$ is determined), define its {\em synaptic input} at time $t+1$, 
\[
\mbox{SI}(i,t+1) = \sum_{(j,i)\in E} \hbox{\rm fires}(j,t)w_{ji}(t).
\]
For each $i$ in area $X$ with inh$(X,t+1)=0$, we set fires$(i,t)=1$ iff $i$ is among the $k$ neurons in its area that have highest SI$(i,t+1)$ (breaking ties arbitrarily).  This is the {\em $k$-cap} operation, a basic ingredient of the AC framework, modeling the inhibitory/excitatory balance of a brain area.\footnote{\citet{binas2014learning} showed rigorously how a $k$-cap dynamic could be emerge in a network of excitatory and inhibitory neurons.}  As for the synaptic weights, 
\[
w_{ji}(t+1)=w_{ji}(t)(1+\beta\cdot\hbox{\rm fires}(j,t)\hbox{\rm fires}(i,t+1)).
\]
That is, if $j$ fires at time $t$ and $i$ fires at time $t+1$, Hebbian plasticity dictates that $w_{ji}$ be increased by a factor of $1+\beta$ at time $t+1$. So that the weights do not grow unlimited, a {\em homeostasis} process renormalizes, at a slower time scale, the sum of weights along the incoming synapses of each neuron (see \citet{davis2006homeostatic} and \citet{turrigiano2011too} for reviews of this mechanism in the brain).

Finally, the AC is a computational system {\em driving} the dynamical system by executing commands at each time step $t$ (like a programming language driving the physical system that is the computer's hardware).  The AC commands {\tt (dis)inhibit$(X)$} change the inhibition status of an area at time $t$; and the command {\tt fire$(x)$,} where $x$ is the name of an assembly (defined next) in a disinhibited area, overrides the selection by $k$-cap, and causes the $k$ neurons of assembly $x$ to fire at time $t$.  

An {\em assembly} is a highly interconnected (in terms of both number of synapses and their weights) set of $k$ neurons {\em in an area} encoding a real world entity. Initially, assembly-like representations exist only in a special {\em sensory area,} as representations of perceived real-world entities such as a heard (or read) word. Assemblies in the remaining, non-sensory areas are {\em an emergent behavior} of the system, copied and re-copied, merged, associated, etc.,~through further commands of the AC.  This is how the model is able to simulate arbitrary $\frac{n}{k}$ space bounded computations \citep{papadimitriou2020brain}. The most basic such command is {\tt project$(x,Y,y)$,} which, starting from an assembly $x$ in area $X$, creates in area $Y$ (where there is connectivity from $X$ to $Y$) a new assembly $y$, which has strong synaptic connectivity from $x$ and which will henceforth fire every time $x$ fires in the previous step, and $Y$ is not inhibited.  This command entails disinhibiting the areas $X,Y$, and then {\tt firing} (the neurons in) assembly $x$ for the next $T$ time steps.  It was shown by \citet{papadimitriou2019random} that, with high probability, after a small number of steps, a stable assembly $y$ in $Y$ will emerge, which is densely intraconnected and has high connectivity from $x$. The mechanism achieving this convergence involves synaptic input from $x$, which creates an initial set $y^1$ of firing neurons in $Y$, which then evolves to $y^t, t=2,\ldots$ through sustained synaptic input from $x$ {\em and recurrent input from $y^{t-1}$}, while these two effects are further enhanced by plasticity.  

Incidentally, this convergence proof (see \cite{legenstein2018long, papadimitriou2019random, papadimitriou2020brain} for a sequence of sharpened versions of this proof over the past years) is the most mathematically sophisticated contribution of this theory to date.  The theorems of the present paper can be seen as substantial {\em generalizations} of that result:  Whereas in previous work an assembly is formed as a copy of one stimulus firing repeatedly (memorization), so that this new assembly will henceforth fire whenever the same stimulus us presented again, in this paper we show rigorously that an assembly will be formed in response to the sequential firing of many stimuli, all drown from the same distribution (generalization), and the formed assembly will fire reliably every time another stimulus from the same distribution is presented. 
\begin{figure}[ht]
    \centering
    \includegraphics[width=0.9\linewidth]{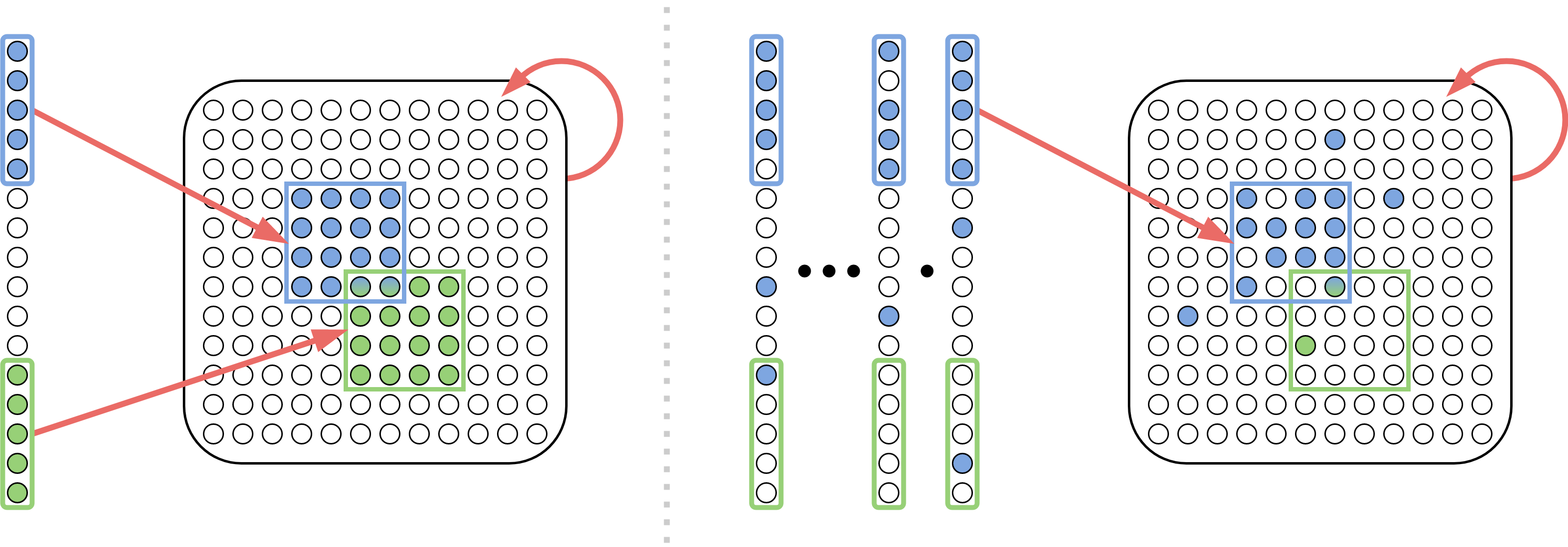}
    \caption{A mathematical model of learning in the brain. Our model (left) has a sensory area (column) connected to a brain area (square), both made up of spiking neurons. Two different stimuli classes (with their core sets in blue/green on the left) project from the sensory area via synaptic connections (arrows). Assemblies in the brain area (with core sets in the corresponding colors) form in response to these stimuli classes, each of which consistently fires when a constant fraction of the associated stimuli class's core set does. Our learning algorithm (right) consists of presenting a stream of stimuli from each class.%, where the neurons that fire in response to each sample are mostly within the core set of the forming assembly
    }
    \label{fig:schematic}
\end{figure}
%All edges are subject to plasticity and homeostasis. Plasticity is Hebbian: If neuron $j$ fires in the time instant after $i$, then the weight of the edge (synapse) $(i,j)$ is increased by a $(1+\beta)$ multiplicative factor, where $\beta$ is the plasticity parameter. In addition, periodically on a much larger time scale, we assume the weights of incoming edges to each neuron are normalized so their sum is $1$ (homeostasis). 

The key parameters of our model are $n, k, p$, and $\beta$. Intended values for the brain are $n = 10^7, k= 10^4, p = 10^{-3}, \beta=0.1$, but in our simulations we have also had success on a much smaller scale, with $n=10^3, k=10^2, p=10^{-1}, \beta =0.1$. $\beta$ is an important parameter, in that adequately large values of $\beta$ guarantee the convergence of the AC operations.
For a publicly available simulator of the Assembly Calculus (in which the Learning System below can be readily implemented) see \url{http://brain.cc.gatech.edu}.

\paragraph{The learning mechanism.} 
For the purpose of demonstrating learning within the framework of AC, we consider the specific setting described below. First, there is a special area, called the {\em sensory area,} in which training and testing data are encoded as assembly-like representations called {\em stimuli}. There is only one other brain area (besides the sensory area), and that is where learning happens, through the formation of assemblies in response to sequences of stimuli.

A {\em stimulus} is a set of about $k$ neurons firing simultaneously (``presented'') in the sensory area. Note that, exceptionally in the sensory area, a number of neurons that is a little different from $k$ may fire at a step.  A {\em stimulus class} $A$ is a distribution over stimuli, defined by three parameters: two scalars $r,q \in [0,1], r > q$, and a set of $k$ neurons $\stimcore{A}$ in the sensory area.  To generate a stimulus $x \in \{0,1\}^n$ in the class $A$, each neuron $i \in \stimcore{A}$ is chosen with probability $r$, while for each $i \not\in \stimcore{A}$, the probability of choosing neuron $i$ is $qk/n$. It follows immediately that, in expectation, an $r$ fraction of the neurons in the stimulus core are set to $1$ and the number of neurons outside the core that are set to $1$ is also $O(k)$. 

The presentation of a sequence of stimuli from a class $A$ in the sensory area evokes in the learning system a {\em response} $R$, a {\em distribution over assemblies} in the brain area.  %An assembly typically describes the response evoked in a brain area by a particular stimulus or stimulus class; we refer to such an assembly as the {\em projection} of the associated stimulus. 
We show that, as a consequence of plasticity and $k$-cap, this distribution $R$ will be highly concentrated, in the following sense: Consider the set $S_R$ of all assemblies $x$ that have positive probability in $R$.  Then the numbers of neurons in both the intersection $\assmcore{R}=\bigcap_{x\in S_R} x$, called the {\em core} of $R$ and the union $\support{R}=\bigcup_{x\in S_R} x$ are close to $k$, in particular $k-o(k)$ and $k+o(k)$ respectively.\footnote{The larger the plasticity, the closer these two values are (see \citet{papadimitriou2019random}, Fig. 2).}  %There is a set of neurons  distribution is highly concentrated over firing patterns which include neurons in its core $\assmcore{A}$, which is a set of size not much more than $k$. The set of all neurons in the assembly which have a positive chance of firing when the associated stimulus does is referred to as the support $\support{A}$, and has size $|\support{A}| = k + o(k)$. 
In other words, neurons in $\assmcore{R}$ fire far more often on average than neurons in $\support{R} \setminus \assmcore{R}$.

Finally, our learning protocol is this: Beginning with the brain area at rest, stimuli are repeatedly sampled from the class, and made to fire. %Crucially, on rounds after the first, neurons also receive recurrent input from those which fired during the previous round. Plasticity is in effect throughout. 
After a small number of training samples, the brain area returns to rest, and then the same procedure is repeated for the next stimulus class, and so on.  Then testing stimuli are presented in random order to test the extent of learning.
(see Algorithm 1 in an AC-derived programming language.)
\if{false}
At a particular moment in time, the state of the network can be described as $\mathcal B = (x, y, A, W)$, where $x \in \{0, 1\}^n$ is the activations of neurons in the sensory area, $y \in \{0, 1\}^n$ is the activations of neurons in the learning area (with $\sum_{i} y_i \le k$), $A$ is a set of weighted directed edges from  neurons in $X$ to neurons in $Y$, and $W$ is a set of weighted directed edges from neurons in $Y$ to other neurons in $Y$, excluding loops. To interact with the brain area, we provide a few basic commands: 

\begin{itemize}
    \item $\texttt{input}(\mathcal B, x)$ updates the current value of the sensory area activations (the input) to $x$
    \item $\texttt{step}(\mathcal B)$ progresses to the next time step: $\mathcal B$ computes the new value $y'$ of $y$ based on the current value of $(x, y, A, W)$, and updates $A$ and $W$ using Hebbian plasticity.
    \item $\texttt{read}(\mathcal B)$ returns $y$, i.e. the indicator vector of the neurons in the learning area which are currently firing
    \item $\texttt{inhibit}(\mathcal B)$ forces $y$ to zero, silencing all activity in the learning area
\end{itemize}
\fi

\begin{algorithm} \label{alg:mechanism}
\caption{The learning mechanism. ($B$ denotes the brain area.)}
\KwIn{a set of stimulus classes $A_1, \ldots, A_c$; $T \ge 1$}
\KwOut{A set of assemblies $y_1, \ldots, y_c$ in the brain area encoding these classes}
\ForEach{ stimulus class $i$}{
    inh$(B) \gets 0$\;\\
    \ForEach{ time step $1 \le t \le T$}{
        Sample $x \sim A_i$ and fire $x$\; 
        }
    $y_i \gets \texttt{read}(B)$\;\\
    inh$(B) \gets 1$\;
    %$\texttt{inhibit}(L)$\;
    }
\end{algorithm}

That is, we sample $T$ stimuli $x$ from each class, fire each $x$ to cause synaptic input in the brain area, and after the $T$th sample has fired we record the assembly which has been formed in the brain area.  This is the representation for this class.

\subsection*{Related work}
There are numerous learning models in the neuroscience literature. 
In a variation of the model we consider here, \citet{RangamaniGandhi2020} have considered supervised learning of Boolean functions using assemblies of neurons, by setting up separate brain areas for each label value.
Amongst other systems with rigorous guarantees, assemblies are superficially similar to the ``items'' of Valiant's neuroidal model \citep{Valiant94}, in which supervised learning experiments have been conducted \citep{Valiant00, FeldmanV09}, where an output neuron is clamped to the correct label value, while the network weights are updated under the model. The neuroidal model is considerably more powerful than ours, allowing for arbitrary state changes of neurons and synapses; in contrast, our assemblies rely on only two biologically sound mechanisms, plasticity and inhibition. 

Hopfield nets \citep{hopfield1982neural} are recurrent networks of neurons with symmetric connection weights which will converge to a memorized state from a sufficiently similar one, when properly trained using a local and incremental update rule.  In contrast, the memorized states our model produces (which we call assemblies) emerge through plasticity and randomization from the structure of a random directed network, whose weights are asymmetric and nonnegative, and in which inhibition --- not the sign of total input --- selects which neurons will fire. 

Stronger learning mechanisms have recently been proposed. Inspired by the success of deep learning, a large body of work has shown that cleverly laid-out microcircuits of neurons can approximate backpropagation to perform gradient descent \citep{Lillicrap2016RandomSF, sacramento2017dendritic, guerguiev2017towards, sacramento2018dendritic, whittington2019theories, lillicrap2020}. These models rely crucially on novel types of neural circuits which, although biologically possible, are not presently known or hypothesized in neurobiology, nor are they proposed as a theory of the way the brain works. These models are capable of matching the performance of deep networks on many tasks, which are more complex than the simple, classical learning problems we consider here. The difference between this work and ours is, again, that here we are showing that learning arises naturally from well-understood mechanisms in the brain, in the context of the assembly calculus.
% adding the important learning capability to the assembly calculus, a computational system intended to bridge the gap between neurons and cognition.

\section{Results} \label{results}
Very few stimuli sampled from an input distribution are activated sequentially at the sensory area.   %Learning algorithms restricted to these mechanisms necessarily have to be iterative, local, and distributed. 
The only form of supervision required is that all training samples from a given class are presented consecutively. %, so that recurrent input from the previous cap biases a common subset of neurons to fire in response to samples from the same concept class.  
Plasticity and inhibition alone ensure that, in response to this activation, an assembly will be formed for each class, and that this same assembly will be recalled at testing upon presentation of other samples from the same distribution. In other words, learning happens. 
And in fact, despite all these limitations, we show that the device is an efficient learner of interesting concept classes.

Our first theorem is about the creation of an assembly in response to inputs from a stimulus class. This is a generalization of a theorem from \citet{papadimitriou2019random}, where the input stimulus was held constant; here the input is a stream of random samples from the same stimulus class. Like all our results, it is a statement holding with high probability (WHP), where the underlying random event is the random graph and the random samples. When sampled stimuli fire, the assembly in the brain area changes. The neurons participating in the current assembly (those whose synaptic input from the previous step is among the $k$ highest) are called the current {\em winners.}
A {\em first-time winner} is a current winner that participated in no previous assembly (for the current stimulus class).

\begin{theorem}[Creation] \label{theorem:creation} 
Consider a stimulus class $A$ projected to a brain area. Assume that 
\[\beta \geq \beta_0 = \frac{1}{r^2}\frac{\left(\sqrt{2} - r^2\right)\sqrt{2\ln\left(\frac{n}{k}\right)} + \sqrt{6}}{\sqrt{kp} + \sqrt{2\ln\left(\frac{n}{k}\right)}}\] Then WHP no first-time winners will enter the cap after $O(\log k)$ rounds, and moreover the total number of winners $\support{A}$  can be bounded as \[|\support{A}| \leq \frac{k}{1-\exp(-(\frac{\beta}{\beta_0})^2)} \leq k + O\left(\frac{\log n}{r^3p\beta^2}\right)\]  \end{theorem}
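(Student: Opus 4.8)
The plan is to follow the winner set round by round and show it ``freezes'' geometrically fast, generalizing the order-statistics analysis behind the convergence of {\tt project} in \citet{papadimitriou2019random}. The basic ingredient is a decomposition of the synaptic input $\mathrm{SI}(i,t)$ of a brain neuron $i$ on round $t$ into a \emph{fresh} part and a \emph{potentiated} part. The fresh part collects weight-$1$ synapses --- those from the core and noise neurons of the current stimulus $x_t$, and those recurrent synapses into $i$ that have never been strengthened --- and is a sum of $\Theta(pk)$ independent indicators, so it has mean $\Theta(pk)$, standard deviation $\Theta(\sqrt{pk})$, and a distribution that does not drift with $t$. The potentiated part collects synapses already strengthened because $i$ won on earlier rounds; using $\E\!\left[(1+\beta)^{\mathrm{Bin}(a,r)}\right]=(1+r\beta)^a$ one finds that a neuron that has been a winner on each of its last $a$ rounds has potentiated input of expected order $pk\,(1+\Theta(r\beta))^{a}$, with standard deviation of the same order --- so the longer a neuron has been a winner, the larger, by a margin that grows \emph{geometrically} in $a$, its input is over that of any never-winner.

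The base case is round $1$: there is no recurrent input yet, so $\mathrm{SI}(i,1)$ is, up to lower-order noise terms, an i.i.d.\ sum of $\Theta(pk)$ indicators across the $n$ neurons, and a Chernoff/Gaussian tail estimate with a union bound over the $n$ neurons pins the $k$-cap threshold and the inputs of the $k$ (necessarily first-time) winners $W_1$ to within an additive $\Theta(\sqrt{pk\ln(n/k)})$, exactly as in the first step of the {\tt project} proof. Round $2$ is the binding case for the hypothesis on $\beta$: it is the first round on which an incumbent can be displaced, and there the recurrent synapses among $W_1$ are not yet potentiated, so an incumbent carries only its \emph{sensory} potentiation surplus, of order $\Theta(pk\,r^2\beta)$ (the factor $r^2$ because a potentiated stimulus synapse had to be active when the neuron last won \emph{and} active again now). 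Requiring this surplus to dominate the fluctuation of order $\sqrt{pk}\,\bigl(\sqrt{2\ln(n/k)}+\cdots\bigr)$ that a never-winner would need in order to break in --- via a concentration bound for the relevant binomial order statistic, whose form produces the sum $\sqrt{kp}+\sqrt{2\ln(n/k)}$ in the denominator --- is precisely the hypothesis $\beta\ge\beta_0$, with the constants $\sqrt2-r^2$ and $\sqrt6$ the slack that absorbs the $r^2$ correction and the failure-probability budget.

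For the inductive step I would maintain, with high probability, that after round $t-1$ the winner set is stabilizing: every neuron that first won at least a constant number of rounds ago still wins, and the number of neurons that first won on round $t-1$ is at most $\rho^{\,t-2}k$, where $\rho=\exp(-(\beta/\beta_0)^2)<1$. Conditioning on the history through round $t-1$, the stimulus $x_t$ and the remaining graph randomness are fresh. A first-time winner on round $t$ lies in $W_t\setminus W_{t-1}$ and hence displaces a neuron of $W_{t-1}$; but a neuron that has been winning for more than a constant number of rounds has, by the geometric growth above --- now driven by recurrent as well as sensory potentiation --- surplus dominating both its own $\Theta(\sqrt{pk\ln(n/k)})$-scale fluctuation and the round-$t$ threshold (which only rises to the ``soft'' level of the freshest winners), so it survives with high probability. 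Hence the population at risk of displacement is essentially just the $\le\rho^{\,t-2}k$ neurons that first won on round $t-1$, and the round-$2$ estimate shows that a given never-winner displaces such a neuron only with probability $\le\rho\cdot k/n$. Summing over the $\le n$ never-winners and applying a concentration inequality --- the displacement indicators are negatively associated once $x_t$ is fixed, and the dependence on $x_t$ is controlled by bounded differences --- gives at most $\rho^{\,t-1}k$ first-time winners on round $t$ with high probability, which re-establishes the hypothesis for round $t+1$ (the newly admitted winners pick up potentiation and join the safe population).

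Finally, $\rho^{\,t-1}k<1$ once $t-1>\ln k/\ln(1/\rho)=O(\log k)$ (using $\ln(1/\rho)=(\beta/\beta_0)^2\ge1$), so with high probability no first-time winners appear after $O(\log k)$ rounds; and the number of distinct ever-winners equals the sum over $t\ge1$ of the numbers of first-time winners on round $t$, hence $|\support{A}|\le k\sum_{t\ge0}\rho^{\,t}=k/\bigl(1-\exp(-(\beta/\beta_0)^2)\bigr)$, the second displayed bound following from $\tfrac{1}{1-e^{-x}}\le 1+\tfrac1x$ and the definition of $\beta_0$. The main obstacle is the inductive step: one must keep the order-statistics bookkeeping consistent as successive cohorts of winners accumulate potentiation at different rates, and control the round-to-round correlations introduced by plasticity tightly enough that the clean threshold $\beta_0$ --- rather than a large constant multiple of it --- is what suffices.
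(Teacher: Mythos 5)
Your proposal is correct and follows essentially the same route as the paper's proof: a round-by-round Gaussian/tail-bound analysis of the cap threshold, the round-2 sensory potentiation surplus of order $\beta k p r^2$ giving exactly the $\beta_0$ condition, geometric growth of a winner's advantage with tenure so that older cohorts become safe, and geometric decay of first-timers yielding convergence in $O(\log k)$ rounds and the support bound $k/(1-\exp(-(\beta/\beta_0)^2))$. The only differences are cosmetic bookkeeping (you sum the per-round first-timer counts, while the paper bounds each entrant's probability of permanence, and it conditions a winner's stimulus input on its total input via the Gaussian/binomial conditioning lemmas rather than using the unconditioned surplus), which lead to the same bounds.
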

\begin{remark}
The theorem implies that for a small constant $c$, it suffices to have plasticity parameter 
\[
\beta \ge \frac{1}{r^2}\frac{c}{\sqrt{kp/(2\ln(n/k))}+1}.
\]
%for a small constant $c$ suffices. 
\end{remark}

\noindent Our second theorem is about {\em recall} for a single assembly, when a new stimulus from the same class is presented.  We assume that examples from an an assembly class $A$ have been presented, and a response assembly $A^*$ encoding this class has been created, by the previous theorem.  

\begin{theorem}[Recall] \label{theorem:recall}
% If at least $\log \log n$ examples were presented, then 
WHP over the stimulus class, the set $C_1$ firing in response to a test assembly from the class $A$ will overlap $\assmcore{A}$ by a fraction of at least $1 - e^{-kpr}$, i.e. \[\frac{|C_1 \setminus \assmcore{A}|}{k} \leq e^{-kpr}\]
\end{theorem}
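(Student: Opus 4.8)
The plan is to show that, after training on class $A$, every neuron of the core $\assmcore A$ carries a bundle of heavily potentiated synapses from the stimulus core $\stimcore A$, so that such a neuron re-enters the $k$-cap in response to a fresh stimulus from $A$ unless, by chance, \emph{none} of those synapses happens to be active; the probability of this last event is controlled by a single Chernoff bound, which is exactly where the factor $e^{-kpr}$ comes from. Note that the test stimulus is presented for one step to an otherwise quiescent brain area, so $C_1$ is determined purely by feed-forward synaptic input, and recurrent weights within the area play no role.

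The first ingredient is a quantitative description of the post-training weights, imported from Theorem~\ref{theorem:creation} and its proof. A neuron $i \in \assmcore A$ fires in every training round, while a stimulus-core neuron $j \in \stimcore A$ fires in an $r$-fraction of the rounds in expectation; since the edge $(j,i)$ is present independently with probability $p$, a Chernoff bound on the random graph shows that WHP every $i \in \assmcore A$ has $(1-o(1))kp$ incoming synapses from $\stimcore A$, and the Hebbian rule potentiates each of them --- after homeostatic renormalization --- enough that a single active such synapse alone contributes more to $i$'s synaptic input than the \emph{entire} input reaching any neuron whose incoming weights are untrained; in particular more than the $O(kp)$ input received by any neuron outside the training support (and, as one must also verify, more than any transient winner from the warm-up phase receives).

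Granting this, fix $i \in \assmcore A$ and draw a test stimulus $x \sim A$: each of the $(1-o(1))kp$ potentiated synapses into $i$ is active independently with probability $r$ (precisely the event that its presynaptic neuron lies in $x$), so with probability at least $1 - (1-r)^{(1-o(1))kp} \ge 1 - e^{-(1-o(1))kpr}$ at least one is active, whence $i$'s synaptic input beats that of every non-core neuron and $i \in C_1$. Thus each core neuron is recalled except with probability $e^{-(1-o(1))kpr}$; summing this over $\assmcore A$ (a union bound even shows that WHP \emph{no} core neuron is missed, once $kpr = \Omega(\log k)$), and using from Theorem~\ref{theorem:creation} that $|\assmcore A| = k - o(k)$ and that the winners of $C_1$ lying outside $\assmcore A$ can only be these potentiated-core neurons, at least a $(1 - e^{-kpr})$-fraction of the $k$ winners in $C_1$ lie in $\assmcore A$, i.e.\ $|C_1 \setminus \assmcore A|/k \le e^{-kpr}$.

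The main obstacle is the weight bookkeeping of the second paragraph: one must verify that homeostatic renormalization does not shrink the trained core synapses below the threshold at which a single one of them wins the cap, and --- more delicately --- that the transient winners from the warm-up phase never accumulate enough potentiation to themselves crowd into $C_1$ on a test stimulus. This is exactly the point where the random-graph-with-dynamic-weights machinery behind Theorem~\ref{theorem:creation} has to be re-run rather than merely invoked; once it is in place, the remainder is a one-line Chernoff estimate on a binomial together with a union bound over $\assmcore A$.
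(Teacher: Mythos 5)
There is a genuine gap, and it sits exactly at the step you flag as ``weight bookkeeping'': your argument hinges on the claim that after training a \emph{single} active potentiated synapse from $\stimcore{A}$ into a core neuron contributes more input than the \emph{entire} input of an untrained neuron. That claim is neither justified nor true in the regime of the theorem. The hypothesis under which the paper proves recall is only that the \emph{average} weight $\gamma$ of the synapses from $\stimcore{A}$ into a neuron of $\assmcore{A}$ satisfies $\gamma \geq 1 + \tfrac{1}{\sqrt{r}}\bigl(\sqrt{2} + \sqrt{\tfrac{2}{kpr}\ln(n/k) + 2}\bigr)$, i.e.\ a constant modestly above $1$ (with $O(\log k)$ training rounds and $\beta$ near $\beta_0$, trained weights are $(1+\beta)^{O(r\log k)}$, a small constant factor). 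Meanwhile an untrained neuron still receives total input of order $kp(r+q)$, so one synapse of weight $\gamma = O(1)$ cannot beat it; your appeal to homeostatic renormalization makes this worse, not better, since renormalization caps each neuron's total incoming weight (and in any case the paper's recall setting has $\gamma>1$ on un-renormalized weights; normalization is only invoked for the linear-threshold theorem). Consequently your ``failure only if no potentiated synapse is active'' event, and the bound $(1-r)^{(1-o(1))kp}$, is not the source of the $e^{-kpr}$ in the statement.

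The paper's mechanism is an aggregate comparison, not a single-synapse one: a core neuron's input is approximately $\mathcal N(\gamma kpr + kpq,\ \gamma^2 kpr + kpq)$, a non-core neuron's is approximately $\mathcal N(kpr+kpq,\ kpr+kpq)$, the cap threshold for outsiders is about $kpr+kpq+\sqrt{kp(r+q)\,(2\ln(n/k) + 2\ln(1/\mu))}$, and the Gaussian tail bound (Lemma~\ref{lemma:tail}) applied to the core neuron's fluctuation, with mean separation $(\gamma-1)kpr$ against standard deviation $O(\sqrt{kp})$, yields $\mu \leq e^{-kpr}$. So the exponent $kpr$ arises as a squared signal-to-noise ratio, and the lower bound on $\gamma$ is exactly what makes that ratio large enough; this is also why the paper states the bound as a fraction of the cap directly, rather than via a per-neuron union bound plus the $|\assmcore{A}| = k - o(k)$ accounting you use (which by itself would add an $o(k)$ slack exceeding $e^{-kpr}k$). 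To repair your proof you would have to replace the single-synapse dominance step with this aggregate Gaussian (or Chernoff-on-the-weighted-sum) comparison against the explicit cap threshold, at which point it becomes the paper's argument.
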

\noindent  The proof entails showing that the average weight of incoming connections to a neuron in $\assmcore{A}$ from neurons in $\stimcore{A}$ is at least \[1 + \frac{1}{\sqrt{r}}\left(\sqrt{2} + \sqrt{\frac{2}{kpr}\ln\left(\frac{n}{k}\right) + 2}\right)\]

\noindent Our third theorem is about the creation of a second assembly corresponding to a second stimulus class. This can easily be extended to many classes and assemblies.  As in the previous theorem, we assume that $O(\log k)$ examples from assembly class $A$ have been presented, and $\support{A}$ has been created.  Then we introduce $B$, a second stimulus class, with $|\stimcore{A} \cap \stimcore{B}| = \alpha k$, and present $O(\log k)$ samples to induce a series of caps, $B_1, B_2, \ldots$, with $B^*$ as their union.

\begin{theorem}[Multiple Assemblies] \label{theorem:multiple}
The total support of $B^*$ can be bounded WHP as 
\[|B^*| \leq \frac{k}{1-\exp(-(\frac{\beta}{\beta_0})^2)} \leq k + O\left(\frac{\log n}{r^3p\beta^2}\right)\]
Moreover, WHP, the overlap in the core sets $\assmcore{A}$ and $\assmcore{B}$ will preserve the overlap of the stimulus classes, so that $|\assmcore{A} \cap \assmcore{B}| \leq \alpha k$.
\end{theorem}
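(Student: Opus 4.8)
The plan is to get the support bound for $\support{B}$ by reduction to the Creation theorem, and the core--overlap bound by a synaptic-input accounting that isolates the ``$A$-advantage'' carried only by the overlap $\stimcore{A}\cap\stimcore{B}$.

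\textbf{The support bound.} The only way the already-formed class-$A$ assembly can influence the class-$B$ dynamics is through synapses whose weights were raised during $A$-training, and after homeostatic renormalization these are (essentially) the incoming synapses of the neurons of $\support{A}$. Since plasticity is monotone, these raised weights can only \emph{increase} the synaptic input of $\support{A}$-neurons, so a neuron outside $\support{A}$ faces, at every round, a cap threshold at least as high as in the fresh single-class setting; hence its chance of being a first-time winner is no larger than in the Creation theorem. I would make this precise by coupling the $B$-process with a fresh process run on the same samples, arguing that the first-time-winner set of the former is stochastically dominated by that of the latter (the $\le\gamma k$ slots persistently held by $\support{A}$-neurons only shrink the ``genuine'' competition pool), so that the geometric decay of first-time winners and hence the bound $k/(1-\exp(-(\beta/\beta_0)^2))$ transfer. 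The one point needing care is that the homeostatic renormalization applied to the pre-existing $A$-weights during $B$-training does not destabilize the threshold evolution, which holds because renormalization only rescales each neuron's total input by a bounded factor.

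\textbf{The core--overlap bound.} A neuron $i$ lies in $\assmcore{A}\cap\assmcore{B}$ only if it is selected into the class-$B$ cap on essentially every round of $B$-training, for every realization of the $B$-samples. Its only advantage over a generic neuron during those rounds comes from the $A$-strengthened synapses into $i$ whose presynaptic endpoint is in $D := \stimcore{A}\cap\stimcore{B}$ with $|D|=\alpha k$ --- the strengthened synapses from $\stimcore{A}\setminus\stimcore{B}$ carry no signal, since those neurons never fire under a $B$-stimulus. I would then (i) invoke the weight estimate behind the Recall theorem to upper bound the post-$A$ weight on each such synapse by a value $w_{\max}$ that is only a constant above $1$; (ii) write the synaptic input of $i\in\assmcore{A}$ on an early $B$-round as a baseline term --- input from the fired neurons of $\stimcore{B}$ at weight $\approx 1$, a sum of $\Theta(rkp)$ near-Bernoulli contributions --- plus an advantage term $\le (w_{\max}-1)\,|D\cap N^-(i)|$, which concentrates around $(w_{\max}-1)\,r\alpha kp$, crucially \emph{proportional to $\alpha$}; (iii) lower bound the $k$-cap threshold by the level reached by the top $k$ of the $\approx n$ neurons receiving no advantage, namely $\approx rkp + \sqrt{2\,rkp\,\ln(n/k)}$; (iv) count: by a right-tail estimate on the baseline term, the number of $\assmcore{A}$-neurons whose advantaged input exceeds this threshold is at most $\alpha k$ with high probability; (v) propagate from the first round to all rounds by self-consistency --- the $\le\alpha k$ neurons of $\assmcore{A}$ that ever lock into the $B$-assembly supply only $O(\alpha kp)$ recurrent synapses into any other $\assmcore{A}$-neuron, too weak (by the same threshold estimate) to recruit further $\assmcore{A}$-neurons --- so the set does not grow, and $|\assmcore{A}\cap\assmcore{B}|\le\alpha k$.

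\textbf{Main obstacle.} The crux is step (iv): the calibration showing that an $A$-advantage \emph{proportional to} $\alpha$ converts into at most an $\alpha$-fraction of $\assmcore{A}$ winning cap slots. This requires using the Recall-theorem weight bound with exactly the right constants, together with sharp control of the upper tail of the synaptic-input distribution conditioned on $i\in\assmcore{A}$ versus a generic neuron. It is further complicated by the recurrent amplification inside $\assmcore{A}$ (handled in step (v), but cleanly only while $\alpha$ stays in the regime where $\alpha\sqrt{kp}$ does not overwhelm $\sqrt{\ln(n/k)}$) and by the need for the per-round estimate to hold \emph{uniformly over all $B$-sample realizations}, since $\assmcore{B}$ is an intersection over the whole support of the response distribution; I would obtain this uniformity by proving the per-round statement as a high-probability event over the random graph alone and then taking a worst case over sample realizations.
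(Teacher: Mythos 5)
Your proposal follows essentially the same route as the paper's proof: the support bound is obtained by re-running the Creation analysis with thresholds adjusted for the pre-existing $A$-weights, and the overlap bound comes from writing an $\assmcore{A}$-neuron's input as a baseline plus an advantage $(\gamma-1)\alpha kpr$ proportional to $\alpha$, comparing it to the first-timer cap threshold via Gaussian tails, and inducting over rounds against the recurrent input from already-captured neurons. One small correction: the calibration requires an \emph{upper} bound on the strengthened weight $\gamma$ (the Recall theorem supplies only the complementary lower bound), which the paper simply states as the condition $\gamma \le 1 + \bigl(\sqrt{2\ln(n/k)} - \sqrt{2\ln((1+r)/r\alpha)}\bigr)/(\alpha r\sqrt{kp})$ that the proof relies on; your closing caveat about the regime where $\alpha\sqrt{kp}$ does not overwhelm $\sqrt{\ln(n/k)}$ is precisely where a constant-size $w_{\max}-1$ remains compatible with that condition.
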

This time the proof relies on the fact that the average weight of incoming connections to a neuron in $\assmcore{A}$ is {\em upper-bounded} by \[\gamma \leq 1 + \frac{\sqrt{2\ln\left(\frac{n}{k}\right)} - \sqrt{2\ln((1+r)/r\alpha)}}{\alpha r \sqrt{kp}}\] 
%This is shown in the Appendix.

\noindent Our fourth theorem is about classification after the creation of multiple assemblies, and shows that random stimuli from any class are mapped to their corresponding assembly. We state it here for two stimuli classes, but again it is extended to several. We assume that stimulus classes $A$ and $B$ overlap in their core sets by a fraction of $\alpha$, and that they have been projected to form a distribution of assemblies $\assmcore{A}$ and $\assmcore{B}$, respectively.

\begin{theorem}[Classification] \label{theorem:classify}
If a random stimulus chosen from a particular class (WLOG, say $B$) fires to cause a set $C_1$ of learning area neurons to fire, then WHP over the stimulus class the fraction of neurons in the cap $C_1$ and in $\assmcore{B}$ will be at least 
% \[\frac{|C_1 \cap \assmcore{B}|}{k} \geq 1 - \exp\left(-\frac{(1- \alpha)^2 kpr}{2(1+\alpha)}\right) - \exp\left(-\frac{1}{2}(\gamma \alpha - 1)^2 kpr \right)\]
\[\frac{|C_1 \cap \assmcore{B}|}{k} \geq 1 - 2\exp\left(-\frac{1}{2}(\gamma \alpha - 1)^2 kpr \right)\]
where $\gamma$ is a lower bound on the average weight of incoming connections to a neuron in $\assmcore{A}$ (resp. $\assmcore{B}$) from neurons in $\stimcore{A}$ (resp. $\stimcore{B}$).
\end{theorem}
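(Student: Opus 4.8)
The argument runs in parallel to that of the Recall theorem (Theorem~\ref{theorem:recall}), but it must additionally keep track of the competing assembly core $\assmcore{A}$. Fix the random graph and the synaptic weights produced by training on class $A$ and then on class $B$; the only remaining randomness is the draw of the single test stimulus $x \sim B$. For a learning‑area neuron $i$, write its synaptic input as $\mathrm{SI}(i) = \sum_{j \in x,\,(j,i) \in E} w_{ji}$, so that $C_1$ is exactly the set of the $k$ neurons of largest $\mathrm{SI}$. It therefore suffices to produce a threshold $\tau$ such that, with high probability over $x$, all but an $\exp(-\tfrac12(\gamma\alpha-1)^2 kpr)$ fraction of $\assmcore{B}$ has $\mathrm{SI} \ge \tau$, while all but that fraction of the neurons outside $\assmcore{B}$ has $\mathrm{SI} < \tau$; the claimed bound on $|C_1 \cap \assmcore{B}|$ then follows because $C_1$ is the top‑$k$ cap.

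First I would lower‑bound $\mathrm{SI}(i)$ for $i \in \assmcore{B}$. Split the incoming edges into those from $\stimcore{B}$ and those from elsewhere. By hypothesis the $\approx kp$ edges into $i$ from $\stimcore{B}$ have average weight at least $\gamma$, and each of the $k$ neurons of $\stimcore{B}$ lies in $x$ independently with probability $r$; hence the $\stimcore{B}$‑part of $\mathrm{SI}(i)$ is a sum of independent bounded terms with mean at least $\gamma kpr$, and a Bernstein/Chernoff estimate gives $\mathrm{SI}(i) \ge \gamma kpr$ up to lower‑order terms, except with probability $\exp(-\tfrac12(\gamma\alpha-1)^2 kpr)$. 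The remaining (essentially untrained, weight $\approx 1$) edges contribute a nonnegative amount concentrated around $O(kp)$, which only helps.

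Next I would upper‑bound $\mathrm{SI}(i')$ for $i' \notin \assmcore{B}$. The worst case is $i' \in \assmcore{A}\setminus\assmcore{B}$, as these are the only competitors carrying strengthened synapses, and those synapses come only from $\stimcore{A}$. The key point is that a stimulus drawn from $B$ exercises $\stimcore{A}$ only through the overlap $\stimcore{A}\cap\stimcore{B}$, of size at most $\alpha k$ — the neurons of $\stimcore{A}\setminus\stimcore{B}$ fire only at the background rate $qk/n$ and contribute $o(k)$. Using the upper bound on the average weight into $\assmcore{A}$‑neurons from $\stimcore{A}$ supplied in the proof of Theorem~\ref{theorem:multiple}, together with the fact that the overlap is a generic $\alpha$‑fraction of $\stimcore{A}$ relative to those weights, the strengthened part of $\mathrm{SI}(i')$ has mean only $\approx \gamma\alpha\,kpr$, while the untrained edges contribute order‑$kpr$ noise; concentration then places $\mathrm{SI}(i')$ strictly below $\gamma kpr$, separated from the baseline level $kpr$ by a margin governed by $\gamma\alpha - 1$, again except with probability $\exp(-\tfrac12(\gamma\alpha-1)^2 kpr)$. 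Generic neurons (in neither core) receive only weight‑$\approx 1$ input and are even safer.

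Finally I would fix $\tau$ between the two concentration values and conclude: by linearity of expectation, the expected number of $\assmcore{B}$‑neurons with $\mathrm{SI} < \tau$, and the expected number of non‑$\assmcore{B}$‑neurons with $\mathrm{SI} \ge \tau$, are each at most $k\exp(-\tfrac12(\gamma\alpha-1)^2 kpr)$; since each per‑neuron indicator depends on $x$ only through a bounded‑influence function, a McDiarmid‑type inequality upgrades this to a high‑probability statement, and then the top‑$k$ cap $C_1$ can miss at most $2k\exp(-\tfrac12(\gamma\alpha-1)^2 kpr)$ neurons of $\assmcore{B}$. I expect the main obstacle to be the upper bound of the previous paragraph: the stated hypotheses only lower‑bound the weights into $\assmcore{A}$ from $\stimcore{A}$, so controlling how much input an $\assmcore{A}$‑neuron can harvest from a $B$‑stimulus forces one to invoke homeostatic normalization and the Multiple Assemblies weight bound, and to argue that the overlap $\stimcore{A}\cap\stimcore{B}$ carries only an $\alpha$‑fraction of that neuron's trained input mass — it is precisely this factor $\alpha$ that both makes separation possible (one needs $\gamma\alpha > 1$) and yields the exponent $\tfrac12(\gamma\alpha-1)^2 kpr$. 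A secondary issue is calibrating $\tau$ so that both tail events carry exactly this exponent and handling the weak dependence among the $n$ threshold events.
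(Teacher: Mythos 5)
Your overall strategy matches the paper's: condition on the trained weights, let the test stimulus be the only remaining randomness, approximate each neuron's synaptic input as Gaussian, and separate three populations --- neurons of $\assmcore{B}$ (mean $\approx \gamma kpr$ plus background), neurons of $\assmcore{A}$ (which harvest trained weight only through $\stimcore{A}\cap\stimcore{B}$, giving mean $\approx \gamma\alpha kpr$), and neurons in neither core (mean $\approx kpr$) --- via a threshold, which is exactly the quantile device used throughout the appendix. You also correctly identify the delicate point that the input to $\assmcore{A}$ from a $B$-stimulus must be controlled through the Theorem~\ref{theorem:multiple}-style bound on the trained weights, which is how the paper handles it as well.

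The gap is in the step you yourself flag, the calibration of $\tau$: you assert that the $\assmcore{A}$-competitor tail carries the exponent $\tfrac12(\gamma\alpha-1)^2kpr$, but that margin belongs to a different pair of distributions. In the paper's proof the threshold is keyed to the $\assmcore{A}$ input distribution (it sits at $\gamma\alpha kpr + kpq$ plus a deviation term), and the $(\gamma\alpha-1)kpr$ margin is the distance from that threshold down to the mean $\approx kpr+kpq$ of the never-fired neurons; it is \emph{their} intrusion probability that is bounded by $\exp(-\tfrac12(\gamma\alpha-1)^2kpr)$. The $\assmcore{A}$-versus-$\assmcore{B}$ separation is instead a gap of $\gamma(1-\alpha)kpr$ against standard deviations of order $\gamma\sqrt{kpr}$, so the $\gamma$'s cancel and the achievable exponent for the fraction of $\assmcore{A}$ intruders is of order $(1-\alpha)^2kpr/(2(1+\alpha))$ --- which is what the paper derives ($\nu \le \exp(-(1-\alpha)^2kpr/(2(1+\alpha)))$, with the $(\gamma\alpha-1)^2$ exponent reserved for the first-timers $\mu$) --- and this coincides with $\tfrac12(\gamma\alpha-1)^2kpr$ only under an extra relation between $\gamma$ and $\alpha$ (roughly $\gamma\alpha-1\lesssim 1-\sqrt{\alpha}$), not in general. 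So a single $\tau$ calibrated as you propose cannot give both tails the stated exponent; to complete the argument you should bound the two error sources separately, concluding $|C_1\setminus\assmcore{B}|/k \le \nu+\mu$ with the two distinct exponents, as the paper does. (Your McDiarmid step for handling the dependence among the $n$ threshold events is fine, and is if anything more careful than the paper, which argues at the level of expected fractions.)
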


%\begin{remark}
%Theorems~\ref{theorem:classify} and \ref{theorem:recall} imply that assemblies can be formed to represent and classify stimulus classes using $O(\log k)$ samples per class.
%\end{remark} 

\noindent Taken together, the above results guarantee that this mechanism can learn to classify well-separated distributions, where each distribution has a constant fraction of its nonzero coordinates in a subset of $k$ input coordinates. The process is {\em naturally interpretable:} an assembly is created for each  distribution, so that random stimuli are mapped to their corresponding assemblies, and the assemblies for different distributions overlap in no more than the core subsets of their corresponding distributions.

Finally, we consider the setting where the labeling function is a linear threshold function, parameterized by an arbitrary nonnegative vector $v$ and margin $\Delta$. We will create a single assembly to represent examples on one side of the threshold, i.e. those for which $v \cdot X \ge  \|v\|_1 k / n$. We define $\mathcal D_+$ denote the distribution of these examples, where each coordinate is an independent Bernoulli variable with mean $\E(X_i) = k/n + \Delta v_i$, and define $\mathcal D_-$ to be the distribution of negative examples, where each coordinate is again an independent Bernoulli variable yet now all identically distributed with mean $k/n$. (Note that the support of the positive and negative distributions is the same; there is a small probability of drawing a positive example from the negative distribution, or vice versa.) To serve as a classifier, a fraction $1- \epsilon_+$ of neurons in the assembly must be guaranteed to fire for a positive example, and a fraction $\epsilon_- < 1 - \epsilon_+$ guaranteed \emph{not} to fire for a negative one. A test example is then classified as positive if at least a $1 - \epsilon$ fraction of neurons in the assembly fire (for $\epsilon \in [\epsilon_-, 1 - \epsilon_+]$), and negative otherwise. The last theorem shows that this can in fact be done with high probability, as long as the normal vector $v$ of the linear threshold is neither too dense nor too sparse. Additionally, we assume synapses are subject to homeostasis in between training and evaluation; that is, all of the incoming weights to a neuron are normalized to sum to 1.
\begin{theorem}[Learning Linear Thresholds]\label{theorem:halfspace}
Let $v$ be a nonnegative vector normalized to be of unit Euclidean length ($\|v\|_2 = 1$). Assume that  $\Omega(k) = \|v\|_1 \le \sqrt{n}/2$ and 
\[\Delta^2\beta \ge \sqrt{\frac{2k}{p}}(\sqrt{2\ln(n/k)+2)} + 1).\] 
Then,
sequentially presenting $\Omega(\log k)$ samples drawn at random from $\mathcal D^+$ forms an assembly $\assmcore{A}$ that correctly separates $D^+$ from $D^-$: with probability $1-o(1)$ a randomly drawn example from $\mathcal D^+$ will result in a cap which overlaps at least $3k/4$ neurons in $\assmcore{A}$, and an example from $\mathcal D^-$ will create a cap which overlaps no more than $k/4$ neurons in $\assmcore{A}$.
\end{theorem}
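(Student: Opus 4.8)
The argument proceeds in three stages and closely parallels --- while substantially generalizing --- the proofs of Theorems~\ref{theorem:creation}--\ref{theorem:classify}, with the margin $\Delta$ and the geometry of $v$ playing the role that the core $\stimcore{A}$ and the firing probability $r$ play there. \emph{Stage 1 (formation of $\assmcore{A}$).} For each neuron $j$ in the brain area, let $V_j = \sum_{i : (i,j) \in E} v_i$ be its structural affinity to the threshold direction; since the in-edges are present independently with probability $p$ and $\|v\|_2 = 1$, $V_j$ has mean $p\|v\|_1$ and variance $\approx p$, so by a Bernstein bound the $k$-th largest $V_j$ among the $n$ neurons exceeds its mean by $\Omega(\sqrt{p\ln(n/k)})$ --- here the hypothesis that $v$ is not too sparse guarantees the sub-Gaussian, rather than the linear, tail governs this order statistic. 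We then rerun the dynamics of Theorem~\ref{theorem:creation}: because under $\mathcal D^+$ a sensory neuron $i$ fires with the boosted probability $k/n + \Delta v_i$, the expected synaptic input of $j$ in the first round is $\Delta V_j$ above the common baseline $|\Gamma_j| k/n$, so the first cap is biased toward high-$V_j$ neurons; Hebbian plasticity then amplifies precisely the synapses out of the heavy coordinates of $v$ into the current winners, and the same rich-get-richer / order-statistics bookkeeping as in Theorem~\ref{theorem:creation} shows that after $O(\log k)$ rounds no first-time winners remain and the cap has stabilized to a set $\assmcore{A}$ with $|\support{A}| \le k + o(k)$. The condition $\Delta^2\beta \ge \sqrt{2k/p}\,(\sqrt{2\ln(n/k)+2}+1)$ is the exact analogue of the condition $\beta \ge \beta_0$ needed to make this convergence go through: the factor $\Delta^2$ replaces $r^2$, and the extra factor of $k$ relative to $\beta_0$ reflects that the ``signal'' here is the diffuse boost $\Delta v$ with $\|v\|_2 = 1$, rather than a sharp core of $k$ reliably-firing neurons.

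\emph{Stage 2 (effect of homeostasis, and positive recall).} After normalizing the incoming weights of every neuron to sum to $1$, a test stimulus from $\mathcal D^-$ gives every neuron $j$ the \emph{same} expected synaptic input $\sum_i w_{ij}(k/n) = k/n$ --- this is the key simplification homeostasis buys. A test stimulus from $\mathcal D^+$ gives $j$ expected input $k/n + \mu_j$ with $\mu_j = \Delta\sum_i w_{ij} v_i$, and the weighted-average-weight estimates used in Theorems~\ref{theorem:recall}--\ref{theorem:classify} carry over (now weighting the synapse $(i,j)$ by $v_i$ instead of by core membership), showing that for every $j$ in the ``deep interior'' of $\assmcore{A}$ --- say the top $3k/4$ by structural affinity --- $\mu_j$ exceeds, by a constant factor, both the $o(1/n)$-scale spread of the order statistics of the $\mathcal D^-$ inputs across the $n$ neurons and the fluctuation of $\sum_i w_{ij}(X_i^+ - X_i^-)$ around its mean. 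Coupling a fresh positive draw so that $X^+ \ge X^-$ coordinatewise gives $\mathrm{SI}^+(j) \ge \mathrm{SI}^-(j) + \sum_i w_{ij}(X_i^+ - X_i^-)$, a sum of independent nonnegative terms with mean $\mu_j$; a Bernstein bound (the ``not too sparse'' hypothesis keeps any single $w_{ij}$ from dominating the variance) together with a union bound over the $n$ neurons shows that all $\ge 3k/4$ deep-interior neurons of $\assmcore{A}$ clear the cap threshold, yielding the claimed overlap.

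\emph{Stage 3 (negative rejection).} For a fresh negative example, every neuron has the same expected input $k/n$, so the cap is a uniformly random $k$-subset up to bounded fluctuations, plus the at most $O(k^2/n)$ neurons of $\assmcore{A}$ whose post-homeostasis weight happens to be concentrated on an incoming synapse that fires. Since the hypothesis $\|v\|_1 \le \sqrt n/2$ together with $\|v\|_1 = \Omega(k)$ forces $k = O(\sqrt n)$ and hence $k^2/n = O(1)$, the expected overlap of the cap with $\assmcore{A}$ is $O(1)$, and a standard concentration argument (tracking which neurons are ``obviously in'' versus competing inside the noise band at the threshold) bounds this overlap by $k/4$ with probability $1-o(1)$.

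\emph{Main obstacle.} The crux is Stage~1: establishing that a stable assembly forms at all despite $\mathcal D^+$ having no sharp core --- the useful signal is only the graded boost $\Delta v_i$, so the lightest relevant coordinates sit barely above noise --- and then pushing the dynamic-edge-weight and order-statistics estimates of Theorem~\ref{theorem:creation} through $\Omega(\log k)$ rounds in this diffuse-signal regime, which is precisely where the hypothesis on $\Delta^2\beta$ is spent. A secondary difficulty is the negative case of Stage~3: because the neurons of $\assmcore{A}$ no longer carry the near-uniform weight profile of a generic neuron, one must verify that their test-time inputs under $\mathcal D^-$ are still not heavy-tailed enough for more than $k/4$ of them to slip into the cap.
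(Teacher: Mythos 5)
Your outline has the right skeleton --- a Theorem~\ref{theorem:creation}-style convergence argument with $\Delta^2$ playing the role of $r^2$, followed by the observation that homeostasis makes every neuron's expected input under $\mathcal D^-$ equal to $k/n$ --- but the quantitative engine of the paper's proof is missing, and one of your anchors is unjustified. The paper never identifies $\assmcore{A}$ with the neurons of highest structural affinity $V_j=\sum_{i:(i,j)\in E}v_i$, and in the regime the theorem is really aimed at ($\beta$ constant, so $\Delta \sim (k/p)^{1/4}\mathrm{polylog} \ll \sqrt{k}$) the across-neuron spread of expected first-round input due to $V_j$, of order $\Delta\sqrt{p}/(np)$, is dominated by the per-sample fluctuation of order $\sqrt{kp}/(np)$: the early winners are determined largely by sample noise and then locked in by plasticity, so your Stage-2 identification of the ``deep interior'' of $\assmcore{A}$ with the top $3k/4$ by structural affinity is both unsupported and unnecessary. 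What the paper actually does is condition on the event that a neuron made the cap: by Lemma \ref{lemma:dotprod}, given $W^i(t)\cdot X(t)\ge C_t$ each weight's conditional expectation gains a factor $1+\beta\Pr(X_j=1)$, so the expected input drifts upward by $\beta\Delta^2\|v\|_2^2/n=\beta\Delta^2/n$ per round; the hypothesis $\Delta^2\beta\ge\sqrt{2k/p}(\sqrt{2\ln(n/k)+2}+1)$ is exactly the comparison of this drift against the first-timer threshold fluctuation $\sqrt{2kp(L+2\ln(1/\mu_t))}/(np)$. Without this conditional-gain computation (which is also where the normalization $\|v\|_2=1$ enters), your ``rich-get-richer bookkeeping'' in Stage 1 has no quantitative content.

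The second genuine gap is in the test phase. Homeostasis does not only help: it dilutes the trained neurons' advantage, since after renormalization an assembly neuron's amplified weights are divided by roughly $np+\beta p\ln(k)\Delta\|v\|_1$. The paper must show the surviving advantage still beats the threshold noise, which requires $\Delta^2-\frac{\Delta\|v\|_1}{n}(k+\Delta\|v\|_1)\ge\Delta^2/2$, and this is precisely where the hypothesis $\|v\|_1\le\sqrt{n}/2$ (together with $\Delta\ge 2k/\sqrt{n}$) is spent. Your proposal uses $\|v\|_1\le\sqrt{n}/2$ only to force $k=O(\sqrt{n})$ in the negative case and never confronts this dilution, so the positive-recall claim ($\ge 3k/4$ overlap) is unsupported. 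Finally, your negative case (``uniformly random $k$-subset plus $O(k^2/n)$ exceptional neurons'') is not an argument: the difficulty you yourself flag --- assembly neurons carry non-uniform weight profiles and hence potentially heavier-tailed inputs under $\mathcal D^-$ --- is the whole point to be controlled. The paper resolves it by noting that after homeostasis the mean input is exactly $k/n$ for every neuron (weights sum to one), and by bounding the input variance of assembly neurons by $k/(n^2p)$ using the concentration of the amplified weights (Lemma \ref{lemma:lognormal}), which gives $\epsilon_-\le k/n$; some such variance bound is indispensable and absent from your plan.
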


\begin{remark}
The bound on $\Delta^2 \beta$ leads to two regimes of particular interest: In the first, \[\beta \ge \frac{\sqrt{2\ln(n/k) + 2} + 1}{\sqrt{kp}}\] and $\Delta \ge \sqrt{k}$, which is similar to the plasticity parameter required for a fixed stimulus \citep{papadimitriou2019random} or stimulus classes; in the second, $\beta$ is a constant, and \[\Delta \ge \left(\frac{2k}{\beta^2 p}\right)^{1/4}\left(\sqrt{2\ln(n/k)+2} +1\right)^{1/2}.\]
\end{remark}

\begin{remark}
We can ensure that the number of neurons outside of $\assmcore{A}$ for a positive example or in $\assmcore{A}$ for a negative example are both $o(k)$ with small overhead\footnote{i.e. increasing the plasticity constant $\beta$ by a factor of $1 + o(1)$}, so that plasticity can be active during the classification phase. 
\end{remark}

Since our focus in this paper is on highlighting the brain-like aspects of this learning mechanism, we emphasize stimulus classes as a case of particular interest, as they are a probabilistic generalization of the single stimuli considered in \citet{papadimitriou2019random}. %(Take $r=1, q=0$ to recover a single stimulus.) 
Linear threshold functions are an equally natural way to generalize a single $k$-sparse stimulus, say $v$; all the 0/1 points on the positive side of the threshold $v^\top x \ge \alpha k$ have at least an $\alpha$ fraction of the $k$ neurons of the stimulus active.

Finally, reading the output of the device by the Assembly Calculus is simple: Add a {\em readout area} to the two areas so far (stimulus and learning), and project to this area one of the assemblies formed in the learning area for each stimulus class.  The assembly in the learning area that fires in response to a test sample will cause the assembly in the readout area  corresponding to the class to fire, and this can be sensed through the {\tt readout} operation of the AC.

\paragraph{Proof overview.}
The proofs of all five theorems can be found in the Appendix.  The proofs hinge on showing that large numbers of certain neurons of interest will be included in the cap on a particular round --- or excluded from it. More specifically: \begin{itemize}
    \item To create an assembly, the sequence of caps should converge to the assembly's core set. In other words, WHP an increasing fraction of the neurons selected by the cap in a particular step will also be selected at the next one.
    \item For recall, a large fraction of the assembly should fire (i.e. be included in the cap) when presented with an example from the class.
    \item To differentiate stimuli (i.e. classify), we need to ensure that a large fraction of the correct assembly will fire, while no more than a small fraction of the other assemblies do.
\end{itemize} Following \citet{papadimitriou2019random}, we observe that if the probability of a neuron having input at least $t$ is no more than $\epsilon$, then no more than an $\epsilon$ fraction of the cohort of neurons will have input exceeding $t$ (with constant probability). By approximating the total input to a neuron as Gaussian and using well-known bounds on Gaussian tail probabilities, we can solve for $t$, which gives an explicit input threshold neurons must surpass to make a particular cap. Then, we argue that the advantage conferred by plasticity, combined with the similarity of examples from the same class, gives the neurons of interest enough of an advantage that the input to all but a small constant fraction will exceed the threshold.
% Section \ref{section:proofs}.

\section{Experiments}
The learning algorithm has been run on both synthetic and real-world datasets, as illustrated in the figures below. Code for experiments is available at 
\url{https://github.com/mdabagia/learning-with-assemblies}.

Beyond the basic method of presenting a few examples from the same class and allowing plasticity to alter synaptic weights, the training procedure is slightly different for each of the concept classes (stimulus classes, linearly-separated, and MNIST digits). In each case, we renormalize the incoming weights of each neuron to sum to one after concluding the presentation of each class, and classification is performed on top of the learned assemblies by predicting the class corresponding to the assembly with the most neurons on. \begin{itemize}
    \item For stimulus classes, we estimate the assembly for each class as composed of the neurons which fired in response to the last training example, which in practice are the same as those most likely to fire for a random test example. 
    \item For a linear threshold, we only present positive examples, and thus only form an assembly for one class. As with stimulus classes, the neurons in the assembly can be estimated by the last training cap or by averaging over test examples. We classify by comparing against a fixed threshold, generally half the cap size.
    % \item For MNIST, we ensure that exactly $k$ pixels in the input are active by convolving with a kernel of all ones and applying a $k$-cap. We also apply negative biasing during training, by setting the firing threshold of each neuron proportional to the number of assemblies it already belongs to. To estimate the supports of the assemblies, we average across the caps for the test examples (using the last cap is unreliable, due to the biasing).
\end{itemize}
Additionally, it is important to set the plasticity parameter ($\beta$) large enough that assemblies are reliably formed. We had success with $\beta = 0.1$ for stimulus classes and $\beta = 1.0$ for linear thresholds.

% \begin{figure}[h]
%     \centering
%     \includegraphics[width=0.45\linewidth]{}
%     \caption{Mean (dark line) and range (shaded area) of classification accuracy for MNIST digits, as progressively more digits are added (in numerical order). }
%     \label{fig:mnist_acc}
% \end{figure}

In Figure \ref{figure:experiments} (a) \& (b), we demonstrate learning of two stimulus classes, while in
Figure \ref{figure:experiments} (c) \& (d), we demonstrate the result of learning a well-separated linear threshold function with assemblies.  Both had perfect accuracy. Additionally, assemblies readily generalize to a larger number of classes (see Figure \ref{figure:fourclasses} in the appendix). We also recorded sharp threshold transitions in classification performance as the key parameters of the model are varied (see Figures \ref{fig:accuracies} \& \ref{fig:transition}).

% We also exhibit the emergence of assemblies during training in Figure \ref{figure:fiverounds}, where only a few updates are required for a robustly-firing core set to appear.

\begin{figure}[t]
    \centering
    \includegraphics[width=0.7\linewidth]{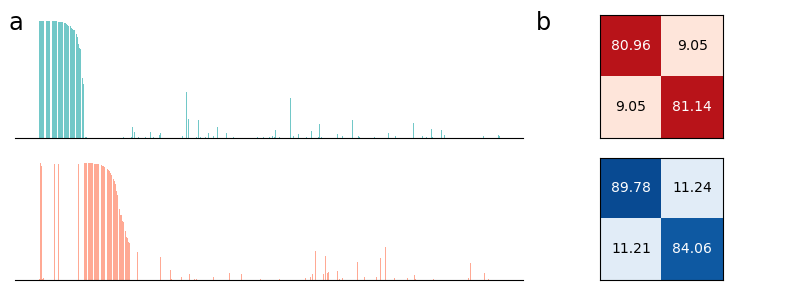}
    \includegraphics[width=0.7\linewidth]{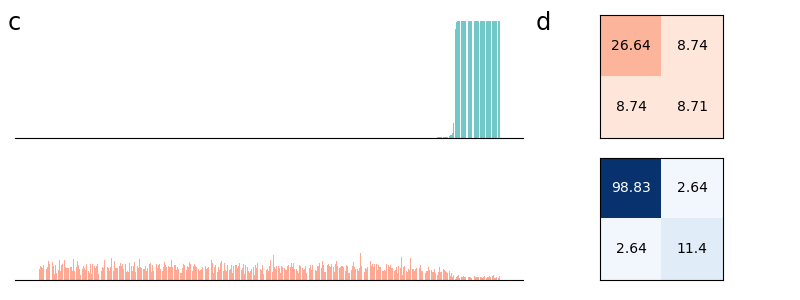}
    \caption{Assemblies learned for various concept classes. On the top two lines, we show assemblies learned for stimulus classes, and on the bottom two lines, for a linear threshold with margin. In (a) \& (c) we exhibit the distribution of firing probabilities over neurons of the learning area. In (b) \& (d) we show the average overlap of different input samples (red square) and the overlaps of the corresponding representations in the assemblies (blue square). Using a simple sum readout over assembly neurons, both stimulus classes and linear thresholds are classified with 100\% accuracy. Here, $n=10^3, k=10^2, p=0.1, r=0.9, q=0.1, \Delta = 1.0$, with 5 samples per class, and $\beta = 0.01$ (stimulus classes) and $\beta=1.0$ (linear threshold).}
    \label{figure:experiments}
\end{figure}

\begin{figure}[h]
    \centering
    \includegraphics[width=0.45\linewidth]{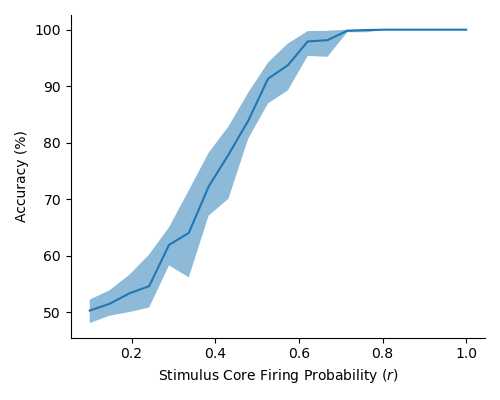}
    \includegraphics[width=0.45\linewidth]{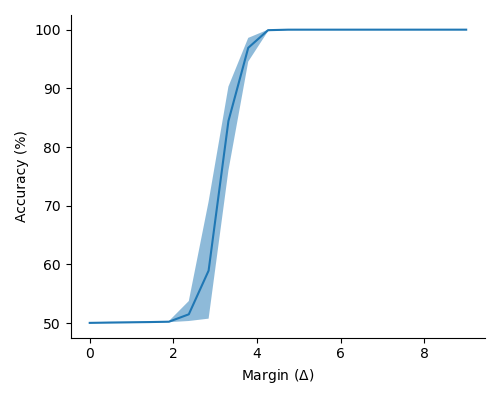}
    \caption{Mean (dark line) and range (shaded area) of classification accuracy for two stimulus classes (left) and a fixed linear threshold (right) over 20 trials, as the classes become more separable. For stimulus classes, we vary the firing probability of neurons in the stimulus core while fixing the probability for the rest at $k/n$, while for the linear threshold, we vary the margin. 
    % The labelling vector $v$ of the linear threshold is normalized with 100 nonzero components. 
    % Recall that an example $x$ is generated by setting each component to 1 independently with probability $k/n + \Delta v_i$. Thus, the margin $\Delta$ is proportional to the number of nonzero components of the label vector which must also be nonzero in the example (minus the mean). 
    For both we used 5 training examples with $n=1000, k=100, p=0.1$, and $\beta = 0.1$ (stimulus classes), $\beta = 1.0$ (linear threshold).}
    \label{fig:accuracies}
\end{figure}

\begin{figure}[h]
    \centering
    \includegraphics[width=0.45\linewidth]{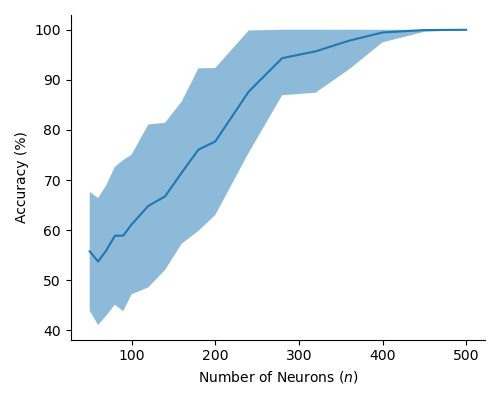}
    \includegraphics[width=0.45\linewidth]{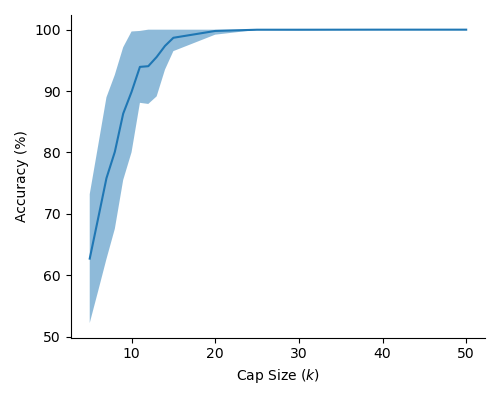}
    \caption{Mean (dark line) and range (shaded area) of classification accuracy of two stimulus classes for various values of the number of neurons ($n$, left) and the cap size ($k$, right). For variable $n$, we let $k = n / 10$; for variable $k$, we fix $n = 1000$. Other parameters are fixed, as $p = 0.1, r = 0.9, q = k/n$, and $\beta = 0.1$.}
    \label{fig:transition}
\end{figure}

There are a number of possible extensions to the simplest strategy, where within a single brain region we learn an assembly for each concept class and classify based on which assembly is most activated in response to an example. 
% An obvious one is to repeat the process over many areas, and take a majority vote. Each class could also be learned in only a subset of the areas, which would allow easily confused classes to be isolated. 
We compared the performance of various classification models on MNIST as the number of features increases. The high-level model is to extract a certain number of features using one of the five different methods, and then find the best linear classifier (of the training data) on these features to measure performance (on the test data). The five different feature extractors are:
\begin{itemize}
    \item Linear features. Each feature's weights are sampled i.i.d. from a Gaussian with standard deviation $0.1$.
    \item Nonlinear features. Each feature is a binary neuron: it has $784$ i.i.d. Bernoulli$(0.2)$ weights, and `fires' (has output $1$, otherwise $0$) if its total input exceeds the expected input ($70 \times 0.2$).
    \item Large area assembly features. In a single brain area of size $m$ with cap size $m / 10$, we attempt to form an assembly for each class. The area sees a sequence of $5$ examples from each class, with homeostasis applied after each class. Weights are updated according to Hebbian plasticity with $\beta = 1.0$. Additionally, we apply a negative bias: A neuron which has fired for a given class is heavily penalized against firing for subsequent classes.
    \item 'Random' assembly features. For a total of $m$ features, we create $m / 100$ different areas of $100$ neurons each, with cap size $10$. We then repeat the large area training procedure above in each area, with the order of the presentation of classes randomized for each area.
    \item 'Split' assembly features: For a total of $m$ features, we create $10$ different areas of $m / 10$ neurons each, with cap size $m / 100$. Area $i$ sees a sequence of $5$ examples from class $i$. Weights are updated according to Hebbian plasticity, and homeostasis is applied after training.
\end{itemize}
After extracting features, we train the linear classification layer to minimize cross-entropy loss on the standard MNIST training set ($60000$ images) and finally test on the full test set ($10000$ images). 

\begin{figure}[b!]\label{fig:mnistcompare}
    \centering
    \includegraphics[width=0.7\linewidth]{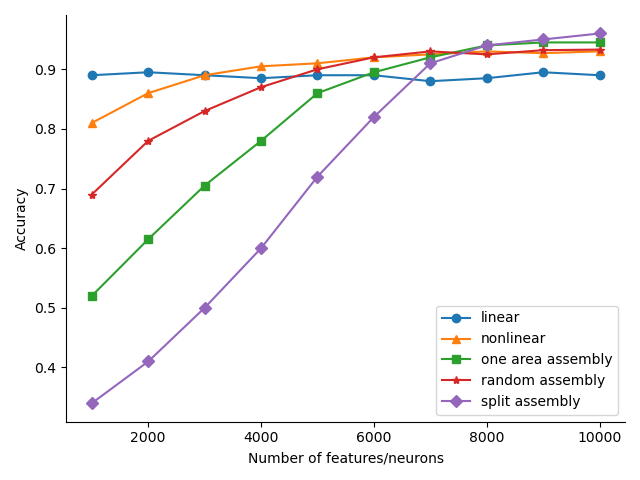}
    \caption{MNIST test accuracy as the number of features increases, for various classification models. 'Split' assembly features, which forms an assembly for class $i$ in area $i$, achieves the highest accuracy with the largest number of features.}
\end{figure}

The results as the total number of features ranges from $1000$ to $10000$ is shown in Fig. \ref{fig:mnistcompare}. 'Split' assembly features are ultimately the best of the five, with 'split' features achieving $96\%$ accuracy with $10000$ features. However, nonlinear features outperform 'split' and large-area features and match 'random' assembly features when the number of features is less than $8000$. 
% Although the 'random' assembly features are marginally outperformed by the single large area features, it is interesting to note that they are orders of magnitude faster to compute since the recurrent graphs are much smaller.
For reference, the linear classifier gets to $89\%$, while a two-layer neural network with width $800$ trained end-to-end gets to $98.4\%$.

Going further, one could even create a hierarchy of brain areas, so that the areas in the first ``layer'' all project to a higher-level area, in hopes of forming assemblies for each digit in the higher-level area which are more robust. In this paper, our goal was to highlight the potential to form useful representations of a classification dataset using assemblies, and so we concentrated on a single layer of brain areas with a very simple classification layer on top. It will be interesting to explore what is possible with more complex architectures. 

\section{Discussion}
Assemblies are widely believed to be involved in cognitive phenomena, and the AC provides evidence of their computational aptitude.  Here we have made the first steps towards understanding how {\em learning} can happen in assemblies. Normally, an assembly is associated with a stimulus, such as Grandma. We have shown that this can be extended to {\em a distribution over stimuli.} Furthermore, for a wide range of model parameters, distinct assemblies can be formed for multiple stimulus classes in a single brain area, so long as the classes are reasonably differentiated.

A model of the brain at this level of abstraction should allow for the kind of classification that the brain does effortlessly --- e.g., the mechanism that enables us to understand that individual frames in a video of an object depict the same object. With this in mind, the learning algorithm we present is remarkably parsimonious: it generalizes from a handful of examples which are seen only once, and requires no outside control or supervision other than ensuring multiple samples from the same concept class are presented in succession (and this latter requirement could be relaxed in a more complex architecture which channels stimuli from different classes).  Finally, even though our results are framed within the Assembly Calculus and the underlying brain model, we note that they have implications far beyond this realm. In particular, they suggest that \emph{any} recurrent neural network, equipped with the mechanisms of plasticity and inhibition, will naturally form an assembly-like group of neurons to represent similar patterns of stimuli.

But of course, many questions remain.  In this first step we considered a single brain area --- whereas it is known that assemblies draw their computational power from the interaction, through the AC, among many areas.  
We believe that a more general architecture encompassing a hierarchy of interconnected brain areas, where the assemblies in one area act like stimulus classes for others, can succeed in learning more complex tasks --- and even within a single brain area improvements can result from optimizing the various parameters, something that we have not tried yet.   

In another direction, here we only considered Hebbian plasticity, the simplest and most well-understood mechanism for synaptic changes. Evidence is mounting in experimental neuroscience that the range of plasticity mechanisms is far more diverse \citep{magee2020synaptic}, and in fact it has been demonstrated recently \citep{payeur2021burst} that more complex rules are sufficient to learn harder tasks. Which plasticity rules make learning by assemblies more powerful?

We showed that assemblies can learn nonnegative linear threshold functions with sufficiently large margins. Experimental results suggest that the requirement of nonnegativity is a limitation of our proof technique, as empirically assemblies readily learn arbitrary linear threshold functions (with margin). What other concept classes can assemblies provably learn?  We know from support vector machines that linear threshold functions can be the basis of far more sophisticated learning when their input is pre-processed in specific ways, while the celebrated results of \citet{rahimi2007random} demonstrated that certain families of random nonlinear features can approximate sophisticated kernels quite well. What would constitute {\em a kernel} in the context of assemblies? The sensory areas of the cortex (of which the visual cortex is the best studied example) do pre-process sensory inputs extracting features such as edges, colors, and motions. Presumably learning by the non-sensory brain --- which is our focus here --- operates on the output of such pre-processing. We believe that studying the implementation of kernels in cortex is a very promising direction for discovering powerful learning mechanisms in the brain based on assemblies.

% Acknowledgments---Will not appear in anonymized version
\acks{We thank Shivam Garg, Chris Jung, and Mirabel Reid for helpful discussions. MD is supported by an NSF Graduate Research Fellowship. SV is supported in part by NSF awards CCF-1909756, CCF-2007443 and CCF-2134105. CP is supported by NSF Awards CCF-1763970 and CCF-1910700, and by a research contract with Softbank.}

\bibliography{references}

\newpage
\appendix
\section*{Appendix: Further experimental results}\label{appendix:experiments}

\begin{figure}[h]
    \centering
    \includegraphics[width=0.8\linewidth]{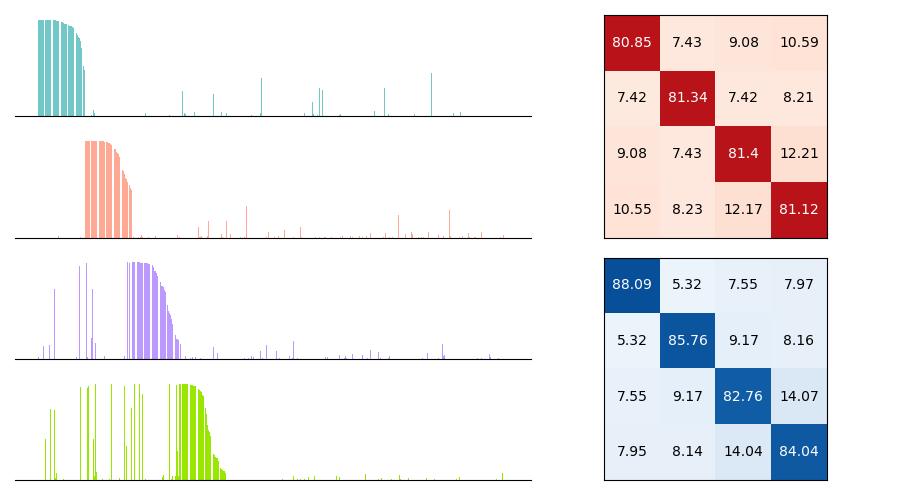}
    \caption{Assemblies learned for four stimulus classes. On the left, the distributions of firing probabilities over neurons; on the right, the average overlap of the assemblies. Each additional class overlaps with previous ones, yet a simple readout over assembly neurons allows for perfect classification accuracy.  Here, $n=10^3, k=10^2, p=0.1, r=0.9, q=0.1, \beta=0.1$, with 5 samples per class.}
    \label{figure:fourclasses}
\end{figure}

\begin{figure}[h]
    \centering
    \includegraphics[width=\linewidth]{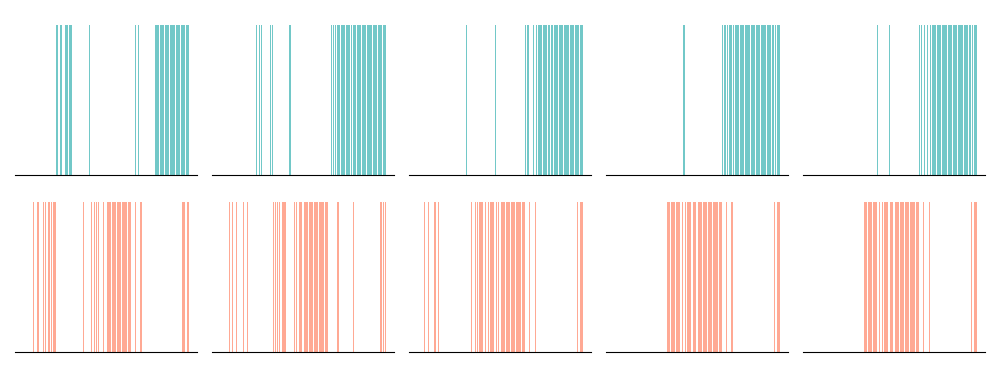}
    \caption{Assemblies formed during training. Each row is the response of the neural population to examples from the respective class. At each step, a new example from the appropriate class is presented. 
    % The concentrated nature of the classes from which these examples come, combined with recurrent input from the previous round, quickly leads to the emergence of a core set for the class after only a few rounds. 
    Due to plasticity, a core set emerges for the class after only a few rounds. (Here, five rounds are shown.) Inputs are drawn from two stimulus classes, with $n=10^3, k=10^2, p=0.1, r=0.9, q=0.1$ and $\beta=0.1$.}
    \label{figure:fiverounds}
\end{figure}

% \newpage

\section*{Appendix: Proofs}

\subsection*{Preliminaries}
We will need a few lemmas. The first is the well-known Berry-Esseen theorem:

\begin{lemma} \label{lemma:be}
Let $X_1, \ldots, X_n$ be independent random variables with $\E[X_i] = 0, \E[X_i^2] = \sigma_i^2$ and $\E[|X_i|^3] = \rho_i < \infty$. Let
\[S = \left(\sum_{i=1}^n \sigma_i^2\right)^{-1/2} \left(\sum_{i=1}^n X_i \right)\]
Denote by $F_S$ the CDF of $S$, and $\Phi$ the standard normal CDF.
There exists some constant $C$ such that
\[\sup_{x \in \R} |F(x) - \Phi(x)| \le C\left(\sum_{i=1}^n \sigma_i^2\right)^{-1/2} \max_i \frac{\rho_i}{\sigma_i^2} \]
\end{lemma}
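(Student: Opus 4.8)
The plan is to give the classical Fourier-analytic (Esseen smoothing) argument: Lemma~\ref{lemma:be} is the standard Berry--Esseen bound, so one would normally just cite it, and the only point worth spelling out is why the error takes the form $s^{-1}\max_i \rho_i/\sigma_i^2$ with $s^2 := \sum_{i=1}^n \sigma_i^2$. Write $\lambda := \max_i \rho_i/\sigma_i^2$, let $\varphi_i$ be the characteristic function of $X_i$ and $\psi$ that of $S$, so that $\psi(t) = \prod_{i=1}^n \varphi_i(t/s)$; the goal is $\|F_S - \Phi\|_\infty = O(\lambda/s)$.

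The first step is Esseen's smoothing inequality: for every $T > 0$,
\[\|F_S - \Phi\|_\infty \;\le\; \frac{1}{\pi}\int_{-T}^{T}\left|\frac{\psi(t) - e^{-t^2/2}}{t}\right|\,dt \;+\; \frac{c_0}{T},\]
which one obtains from Fourier inversion after convolving $F_S$ with a fixed smooth density of scale $1/T$ and using that $\Phi$ has density at most $1/\sqrt{2\pi}$. This reduces the whole problem to bounding $|\psi(t) - e^{-t^2/2}|$ on the window $|t| \le T$ and then optimizing $T$.

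Second, I would Taylor-expand each factor. Since $\E X_i = 0$, $\E X_i^2 = \sigma_i^2$ and $\E|X_i|^3 = \rho_i$, we have $\varphi_i(u) = 1 - \frac{1}{2}\sigma_i^2 u^2 + \theta_i(u)$ with $|\theta_i(u)| \le \frac{1}{6}\rho_i|u|^3$; moreover Lyapunov's inequality gives $\sigma_i^3 \le \rho_i$, hence $\sigma_i^2/s^2 \le \lambda/s$ for every $i$. Choosing $T \asymp s/\lambda$ makes $|t/s|$ uniformly small over the window, so that $|\varphi_i(t/s)| \le e^{-\sigma_i^2 t^2/(4s^2)}$ and $\log\varphi_i(t/s) = -\frac{1}{2}\sigma_i^2 t^2/s^2 + O(\rho_i|t|^3/s^3)$. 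Summing over $i$ and using $\sum_i \rho_i \le \lambda \sum_i \sigma_i^2 = \lambda s^2$ gives $\log\psi(t) = -t^2/2 + O(|t|^3\lambda/s)$, hence $|\psi(t) - e^{-t^2/2}| \le C|t|^3(\lambda/s)e^{-t^2/4}$ on the window. Substituting into the smoothing inequality, the integral is at most $C(\lambda/s)\int_{\R} t^2 e^{-t^2/4}\,dt = O(\lambda/s)$ and the remainder is $c_0/T = O(\lambda/s)$, which proves the bound.

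The hard part is the uniformity over $|t| \le T$: the cut-off $T$ must be large enough that $c_0/T = O(\lambda/s)$ yet small enough that, for all $i$ at once, $t/s$ stays in the regime where the cubic remainder dominates the Taylor tail of $\varphi_i$ and the complex logarithm of $\varphi_i(t/s)$ is unambiguously defined. This is exactly where the calibration $T \asymp s/\lambda$ together with Lyapunov's inequality $\sigma_i^3 \le \rho_i$ are used, and it is also what converts the quantity $\sum_i \rho_i/s^3$ that appears naturally in the computation into the stated bound $s^{-1}\max_i \rho_i/\sigma_i^2$ --- the two agree exactly when the $X_i$ are i.i.d.
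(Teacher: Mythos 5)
The paper does not prove this lemma at all: it is stated as ``the well-known Berry--Esseen theorem'' and simply invoked, so there is no in-paper argument to match. Your sketch supplies the standard Esseen smoothing proof, and it is essentially sound: the smoothing inequality, the per-factor Taylor expansion with cubic remainder $\tfrac16\rho_i|u|^3$, the window calibration $T\asymp s/\lambda$ with $\lambda=\max_i\rho_i/\sigma_i^2$, and Lyapunov's inequality $\sigma_i^3\le\rho_i$ (which gives $\sigma_i\le\lambda$, keeping each exponent $\sigma_i^2t^2/s^2$ bounded and the logarithms well-defined on the window) are exactly the right ingredients, and your closing observation that $\sum_i\rho_i\le\lambda\sum_i\sigma_i^2=\lambda s^2$ is the one-line step that converts the classical non-identically-distributed bound $C\sum_i\rho_i/s^3$ into the max form stated in the lemma. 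Two small remarks: the shortest route consistent with the paper's intent is to cite the classical bound $C\sum_i\rho_i/s^3$ and apply only that last inequality, rather than reproving Berry--Esseen from scratch; and in your uniformity discussion it is not $|t|/s$ itself that is small over the window but rather the ratios $\rho_i|t|/(s\,\sigma_i^2)\le c$ and $\sigma_i|t|/s\le c$, which is what actually makes the cubic remainder subordinate to the quadratic term --- a sketch-level imprecision, not a gap in the argument.
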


\noindent This implies the following:

\begin{lemma} \label{lemma:normalapprox}
Let $X_1, \ldots, X_{2n}$ denote the weights of the edges incoming to a neuron in the brain area from its neighbors (i.e. $X_i = w_i$ w.p. $p$, and $0$ otherwise). Denote their sum as $S = \sum_i X_i$, and consider the normal random variable
\[Y \sim \mathcal N(\E[S], \var[S])\]
Then for $p = o(1)$, and $w_i = o(\sqrt{np})$,
\[\sup_{x \in \R} |\Pr(S < t) - \Pr(Y < t)| = o(1)  \]
\end{lemma}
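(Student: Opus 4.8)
The plan is to apply the Berry--Esseen theorem (Lemma~\ref{lemma:be}) essentially verbatim: center the summands, check that the third-moment condition is controlled by the hypothesis $w_i = o(\sqrt{np})$, and then observe that the Kolmogorov distance in the conclusion is unchanged under the affine map that standardizes $S$.

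First I would write $X_i = w_i B_i$ with the $B_i \sim \mathrm{Bernoulli}(p)$ independent, and set $\tilde X_i = X_i - \E[X_i] = w_i(B_i - p)$. A one-line computation gives $\sigma_i^2 := \var[X_i] = w_i^2\,p(1-p)$ and $\rho_i := \E[|\tilde X_i|^3] = |w_i|^3\,p(1-p)[(1-p)^2 + p^2]$. Since $(1-p)^2 + p^2 \le 1$ for $p\in[0,1]$, this gives the clean bound $\rho_i/\sigma_i^2 \le |w_i|$, hence $\max_i \rho_i/\sigma_i^2 \le \max_i |w_i| = o(\sqrt{np})$ by hypothesis. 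For the normalization, $\sum_{i=1}^{2n}\sigma_i^2 = p(1-p)\sum_{i=1}^{2n} w_i^2 = \Theta(np)$, using that $p = o(1)$ so $1-p = 1-o(1)$, and that a constant fraction of the $2n$ incoming weights are $\Theta(1)$ (they are initialized to $1$ and only modified multiplicatively and by homeostatic renormalization), so $\sum_i w_i^2 = \Theta(n)$; note this also forces $np\to\infty$, since $\Theta(1) = w_i = o(\sqrt{np})$. Plugging both estimates into Lemma~\ref{lemma:be},
\[
\sup_{x\in\R}\left|F_{S_{\mathrm{norm}}}(x) - \Phi(x)\right| \le C\left(\sum_i \sigma_i^2\right)^{-1/2}\max_i \frac{\rho_i}{\sigma_i^2} = \Theta\!\left((np)^{-1/2}\right)\cdot o\!\left(\sqrt{np}\right) = o(1),
\]
where $S_{\mathrm{norm}} = \left(\sum_i\sigma_i^2\right)^{-1/2}\sum_i \tilde X_i$.

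It remains to transfer this to the statement about $S$ and $Y\sim\mathcal N(\E[S],\var[S])$. For any $t$, writing $x(t) = (t-\E[S])/\sqrt{\var[S]}$, we have $\Pr(S<t) = F_{S_{\mathrm{norm}}}(x(t))$ and $\Pr(Y<t) = \Phi(x(t))$ (both are affine reparametrizations of the same standardized quantity), so $\sup_t\left|\Pr(S<t)-\Pr(Y<t)\right| = \sup_t\left|F_{S_{\mathrm{norm}}}(x(t))-\Phi(x(t))\right| \le \sup_x\left|F_{S_{\mathrm{norm}}}(x)-\Phi(x)\right| = o(1)$, which is the claim.

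The computation is routine; the only point requiring care is the \emph{lower} bound $\sum_i\sigma_i^2 = \Omega(np)$, i.e.\ ensuring the variance of $S$ does not degenerate. This is where one must invoke the implicit fact that a constant fraction of the incoming weights stay bounded away from $0$ (guaranteed by the initialization at $1$ together with homeostasis), without which the Berry--Esseen bound need not be $o(1)$ and the normal approximation could fail. Everything else is a direct substitution into Lemma~\ref{lemma:be} plus the elementary observation that Kolmogorov distance is invariant under the affine rescaling $S \mapsto S_{\mathrm{norm}}$.
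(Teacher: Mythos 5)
Your proposal is correct and follows essentially the same route as the paper: center the summands, compute $\sigma_i^2 = p(1-p)w_i^2$ and bound $\rho_i/\sigma_i^2 \le w_i = o(\sqrt{np})$, apply Lemma~\ref{lemma:be}, and transfer to $S$ and $Y$ by affine invariance of the Kolmogorov distance. The only cosmetic difference is in the non-degeneracy of $\var[S]$: the paper simply invokes $w_i \ge 1$ (weights start at $1$) to get $\sum_i \sigma_i^2 \ge 2p(1-p)n$, whereas you argue that a constant fraction of the weights remain $\Theta(1)$, which serves the same purpose.
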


\begin{proof}
Let $\tilde X_i = X_i - \E[X_i]$. We have
\[\E[\tilde X_i^2] = p(1-p)^2w_i^2 + (1-p)p^2w_i^2 = p(1-p)w_i^2\]
and
\[\E[|\tilde X_i|^3] = p(1-p)^3w_i^3 + (1-p)p^3w_i^3 \le p(1-p)w_i^3\]
Let 
\[\tilde S = \frac{S - \E[S]}{\sqrt{\var[S]}} = \left(\sum_{i=1}^{2n} \E[\tilde X_i^2]\right)^{-1/2} \left(\sum_{i=1}^{2n} \tilde X_i \right)\]
Then by Lemma \ref{lemma:be}, there exists some constant $C$ so that
\[\sup_{x \in \R} |F_{\tilde S}(x) - \Phi(x)| \le C \left(\sum_{i=1}^{2n} \E[\tilde X_i^2]\right)^{-1/2} \max_i \frac{\E[|\tilde X_i|^3]}{\E[\tilde X_i^2]}\]
As $1 \le w_i =o(\sqrt{np})$, it follows that
\[\sum_{i=1}^{2n} \E[\tilde X_i^2] = p(1-p) \sum_{i=1}^{2n} w_i^2 \ge 2p(1-p) n\]
and
\[\frac{\E[|\tilde X_i|^3]}{\E[\tilde X_i^2]} \le w_i = o(\sqrt{np})\]
Hence, 
\[\sup_{x \in \R} |F_{\tilde S}(x) - \Phi(x)| \le C \frac{w_i}{\sqrt{2p(1-p)n}} = o\left(\frac{1}{\sqrt{1-p}}\right) = o(1)\]
for $p = o(1)$. Noting that 
\[\Pr(S < t) = \Pr\left(\tilde S < \frac{t - \E[S]}{\sqrt{\var[S}}\right) = F_{\tilde S}\left(\frac{t - \E[S]}{\sqrt{\var[S]}}\right)\]
and using the affine property of normal random variables completes the proof.
\end{proof}

\noindent In particular, we may substitute appropriate Gaussian tail bounds (such as the one provided in the following lemma) for tail bounds on sums of independent weighted Bernoullis throughout.

\begin{lemma}\label{lemma:tail}
For $X \sim \mathcal N(0, 1)$ and $t > 0$, \[\Pr (X > t) \leq \frac{1}{\sqrt{2\pi}t} e^{-t^2/2}\]
For $X \sim \mathcal N(\mu, \sigma^2)$ and $P(X > t) = p$, we have \[t = \mu + \sigma \sqrt{2\ln(1/p) + \ln(2\ln(1/p))} + o(1)\]
\end{lemma}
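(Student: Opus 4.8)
The plan is to handle the two assertions separately, since the second reduces to the first by standardization. For the tail bound, I would use the classical Mills-ratio domination argument: writing $\Pr(X > t) = \int_t^\infty \frac{1}{\sqrt{2\pi}} e^{-x^2/2}\,dx$ and noting that $x/t \ge 1$ throughout the range of integration, I would bound the integrand from above by $\frac{x}{t}\cdot\frac{1}{\sqrt{2\pi}}e^{-x^2/2}$, whose antiderivative is elementary. This gives $\Pr(X>t) \le \frac{1}{\sqrt{2\pi}\,t}\int_t^\infty x\, e^{-x^2/2}\,dx = \frac{1}{\sqrt{2\pi}\,t}e^{-t^2/2}$, which is exactly the first claim. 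For the quantile statement I would first reduce the case $X \sim \mathcal N(\mu,\sigma^2)$ to the standard case by setting $z = (t-\mu)/\sigma$, so that $\Pr(X>t) = \bar\Phi(z) = p$ and it remains to show $z = \sqrt{2\ln(1/p) + \ln(2\ln(1/p))} + o(1)$ as $p \to 0$ (equivalently $z \to \infty$); the formula for $t$ then follows from $t = \mu + \sigma z$.

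The core of the second part is inverting the tail. I would sandwich $\bar\Phi$ between the upper bound just established and the matching lower bound $\bar\Phi(z) \ge \big(\tfrac1z - \tfrac1{z^3}\big)\frac{1}{\sqrt{2\pi}}e^{-z^2/2}$, the next term of the Mills expansion, provable by the same domination trick. Together these give $\bar\Phi(z) = \frac{1}{\sqrt{2\pi}\,z}e^{-z^2/2}\,(1 + o(1))$. Taking logarithms of $p = \bar\Phi(z)$ then converts the transcendental relation into
\[\frac{z^2}{2} = \ln(1/p) - \ln z - \tfrac12\ln(2\pi) + o(1),\]
which I would solve by bootstrapping. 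The upper tail bound alone forces $z^2 \le 2\ln(1/p)$, hence $z = O(\sqrt{\ln(1/p)})$ and $\ln z = \tfrac12\ln(2\ln(1/p)) + o(1) = O(\ln\ln(1/p))$; feeding this back into the displayed relation yields $z^2 = 2\ln(1/p) + O(\ln\ln(1/p))$.

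Finally I would take the square root. Writing $z^2 = 2\ln(1/p) + E$ with $E = O(\ln\ln(1/p))$, the expansion $\sqrt{2\ln(1/p)+E} = \sqrt{2\ln(1/p)} + \frac{E}{2\sqrt{2\ln(1/p)}} + \cdots$ shows the correction is $O(\ln\ln(1/p))/\sqrt{\ln(1/p)} = o(1)$, so $z = \sqrt{2\ln(1/p)} + o(1)$. The same computation shows that inserting any term of size $O(\ln\ln(1/p))$ inside the radical — in particular the $+\ln(2\ln(1/p))$ that appears in the statement — perturbs $z$ by only $o(1)$, so $z = \sqrt{2\ln(1/p) + \ln(2\ln(1/p))} + o(1)$ holds as claimed, and un-standardizing gives the stated expansion for $t$.

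The step I expect to require the most care is this final inversion-by-bootstrapping: obtaining matching upper and lower bounds on $\bar\Phi$ precise enough that the logarithm of $p$ pins $z^2$ to within an additive $O(\ln\ln(1/p))$, and then verifying that this residual — together with the explicit doubly-logarithmic term in the statement — is genuinely absorbed into the $o(1)$ once the square root is taken. The tail bound itself and the reduction to the standard normal are routine elementary integrations.
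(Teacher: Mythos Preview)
Your proof of the first part is identical to the paper's: the same domination $e^{-\tau^2/2}\le\tfrac{\tau}{t}e^{-\tau^2/2}$ on $[t,\infty)$ followed by elementary integration. For the second part the paper writes only ``simply solve for $t$,'' so your bootstrapping argument with the two-sided Mills bounds is a correct and considerably more careful execution of the same idea.
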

\begin{proof}
Recall that \[\Pr(X > t) = \int_t^\infty \frac{1}{\sqrt{2\pi}} e^{-\tau^2 / 2} \text{d}\tau\] Then observing that for $\tau \ge t$, \[ e^{-\tau^2 / 2} \le \frac{\tau}{t} e^{-\tau^2 / 2}\] we have \[\Pr(X > t) \le \frac{1}{\sqrt{2\pi}t}\int_t^\infty \tau e^{-\tau^2 / 2} = \frac{1}{\sqrt{2\pi}t}e^{-t^2/2} \] For the second part, simply solve for $t$.
\end{proof}

\noindent Next, we will use the distribution of a normal random variable, conditioned on its sum with another normal random variable. 

\begin{lemma}\label{lemma:bayes}
For $X \sim \mathcal N(\mu_x, \sigma_x^2)$, $Y \sim \mathcal N(\mu_y, \sigma_y^2)$, and $Z = X + Y$, then conditioning $X$ on $Z$ gives \[X |(Z = z) \sim \mathcal N\left(\frac{\sigma_x^2}{\sigma_x^2 + \sigma_y^2}z + \frac{\sigma_y^2\mu_x- \sigma_x^2\mu_y}{\sigma_x^2 + \sigma_y^2}, \frac{\sigma_x^2\sigma_y^2}{\sigma_x^2 + \sigma_y^2}\right)\]
\end{lemma}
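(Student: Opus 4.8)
The plan is to treat this as the standard Gaussian-conditioning identity and verify it by a direct density computation, using the (implicit) assumption that $X$ and $Y$ are independent. First I would change variables from $(X,Y)$ to $(X,Z)$ with $Z = X+Y$; since this map has unit Jacobian, the joint density of $(X,Z)$ at $(x,z)$ is $f_X(x)\,f_Y(z-x)$, where $f_X$ and $f_Y$ are the given normal densities. Dividing by the marginal density of $Z$ (which, since the sum of independent Gaussians is Gaussian, is $\mathcal N(\mu_x+\mu_y,\sigma_x^2+\sigma_y^2)$, though we will not even need its explicit form) shows that, as a function of $x$ for fixed $z$, the conditional density $f_{X\mid Z}(x\mid z)$ is proportional to $\exp\!\big(-\tfrac{(x-\mu_x)^2}{2\sigma_x^2} - \tfrac{(z-x-\mu_y)^2}{2\sigma_y^2}\big)$.

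The second step is to recognize that the exponent above is a quadratic polynomial in $x$, so the conditional law is automatically Gaussian; it only remains to read off its mean and variance by completing the square. Collecting the $x^2$ and $x$ terms gives the coefficient $-\tfrac12(\sigma_x^{-2}+\sigma_y^{-2})$ on $x^2$ and $\sigma_x^{-2}\mu_x + \sigma_y^{-2}(z-\mu_y)$ on $x$; hence the conditional variance is $(\sigma_x^{-2}+\sigma_y^{-2})^{-1} = \sigma_x^2\sigma_y^2/(\sigma_x^2+\sigma_y^2)$, and the conditional mean is that variance times the linear coefficient, which simplifies to $\tfrac{\sigma_x^2}{\sigma_x^2+\sigma_y^2}z + \tfrac{\sigma_y^2\mu_x - \sigma_x^2\mu_y}{\sigma_x^2+\sigma_y^2}$, exactly as claimed.

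An equivalent route, which I would mention as an alternative, is to observe that $(X,Z)$ is jointly Gaussian with $\mathrm{Cov}(X,Z) = \var(X) = \sigma_x^2$ and $\var(Z) = \sigma_x^2+\sigma_y^2$, and then invoke the textbook conditional-Gaussian formula $X\mid(Z=z)\sim\mathcal N\!\big(\mu_x + \tfrac{\mathrm{Cov}(X,Z)}{\var(Z)}(z-\E Z),\ \var(X) - \tfrac{\mathrm{Cov}(X,Z)^2}{\var(Z)}\big)$; substituting and simplifying yields the same two expressions. There is no genuine obstacle here — this is a routine identity — and the only thing to watch is the bookkeeping when completing the square (and, if one wants a fully self-contained argument rather than citing the conditional-Gaussian formula, tracking the normalization, which is why the density-ratio approach above is the cleanest). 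I would present the direct density computation as the proof.
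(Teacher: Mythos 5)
Your proposal is correct and follows essentially the same route as the paper: both compute the conditional density via Bayes' rule, using $f_{Z\mid X=x}(z)=f_Y(z-x)$ and the Gaussian marginal of $Z$, and then identify the resulting quadratic-in-$x$ exponent as the claimed normal law (the paper simplifies the full ratio directly, you complete the square on the proportional form, which is only a cosmetic difference).
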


\begin{proof}
Bayes' theorem provides \[f_{X|Z=z}(x, z) = \frac{f_{Z | X=x}(z, x)f_X(x)}{f_Z(z)}\] where $f_W$ is the probability density function for variable $W$. From the definition, $f_{Z | X = x}(z, x) = f_Y(z - x)$. Then substituting the Gaussian probability density function and simplifying, we have: \begin{align*}
    f_{X|Z=z}(x, z) &= \frac{\frac{1}{\sqrt{2\pi\sigma_y^2}}\exp\left(-\frac{(z-x - \mu_y)^2}{2\sigma_y^2}\right) \frac{1}{\sqrt{2\pi\sigma_x^2}} \exp\left(-\frac{(x-\mu_x)^2}{2\sigma_x^2}\right)}{\frac{1}{\sqrt{2\pi(\sigma_x^2 + \sigma_y^2)}} \exp\left(-\frac{(z-(\mu_x + \mu_y))^2}{2(\sigma_x^2 + \sigma_y^2)}\right)}\\
    &= \frac{1}{\sqrt{2\pi \frac{\sigma_x^2\sigma_y^2}{\sigma_x^2 + \sigma_y^2}}} \exp\left(-\frac{1}{2\frac{\sigma_x^2\sigma_y^2}{\sigma_x^2+\sigma_y^2}}\left(x - \left(\frac{\sigma_x^2}{\sigma_x^2 + \sigma_y^2}z + \frac{\sigma_y^2\mu_x- \sigma_x^2\mu_y}{\sigma_x^2 + \sigma_y^2}\right)\right)^2\right)\\
\end{align*}
\end{proof}

\noindent The following is the distribution of a binomial variable $X$, given that we know the value of another binomial variable $Y$ which uses $X$ as its number of trials.

\begin{lemma} \label{lemma:coinflip}
Denote by $\mathcal B(n, p)$ the binomial distribution over $n$ trials with probability of success $p$. Let $X \sum \mathcal B(n, p)$ and $Y | X \sim \mathcal B(X, q)$. Then
\[X | Y \sim Y + \mathcal B(n - Y, \tfrac{p(1-q)}{1-pq})\]
\end{lemma}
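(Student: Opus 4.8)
The plan is a direct Bayes computation. Writing $\theta = \frac{p(1-q)}{1-pq}$, I want to show $\Pr(X = x \mid Y = y) = \binom{n-y}{x-y}\theta^{x-y}(1-\theta)^{(n-y)-(x-y)}$ for $x \ge y$, which is exactly the pmf of $y + \mathcal B(n-y,\theta)$ evaluated at $x$. By Bayes' rule,
\[
\Pr(X = x \mid Y = y) = \frac{\Pr(Y = y \mid X = x)\,\Pr(X = x)}{\Pr(Y = y)},
\]
where $\Pr(X = x) = \binom{n}{x}p^x(1-p)^{n-x}$ and $\Pr(Y = y \mid X = x) = \binom{x}{y}q^y(1-q)^{x-y}$.

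The first step is to identify the marginal law of $Y$. I would argue by thinning: running $n$ independent trials, each a ``success'' with probability $p$ and then independently retained with probability $q$, makes each trial an ultimate success with probability $pq$ independently, so $Y \sim \mathcal B(n, pq)$ and $\Pr(Y = y) = \binom{n}{y}(pq)^y(1-pq)^{n-y}$. (Alternatively this follows by summing $\Pr(Y=y\mid X=x)\Pr(X=x)$ over $x$, but the thinning argument is cleaner.)

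The remaining step is bookkeeping. Substituting the three pmfs, I would apply the subset-of-a-subset identity $\binom{n}{x}\binom{x}{y} = \binom{n}{y}\binom{n-y}{x-y}$ to cancel the $\binom{n}{y}$ against the one in $\Pr(Y=y)$; then cancel $q^y$ and $p^y$ against the $(pq)^y$ in the denominator; and finally regroup the surviving factors as $\binom{n-y}{x-y}\big(p(1-q)\big)^{x-y}(1-p)^{(n-y)-(x-y)}(1-pq)^{-(n-y)}$. Observing that $1-\theta = \frac{1-p}{1-pq}$, this is precisely $\binom{n-y}{x-y}\theta^{x-y}(1-\theta)^{(n-y)-(x-y)}$, completing the proof.

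There is essentially no genuine obstacle here — the only place that warrants care is making the marginal $Y \sim \mathcal B(n, pq)$ rigorous (the thinning argument), after which everything reduces to elementary cancellation; I would keep that algebra terse.
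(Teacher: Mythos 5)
Your proposal is correct and follows essentially the same route as the paper's proof: Bayes' rule with the marginal $Y \sim \mathcal B(n, pq)$, followed by the binomial-coefficient cancellation $\binom{n}{x}\binom{x}{y} = \binom{n}{y}\binom{n-y}{x-y}$ and regrouping to recognize the pmf of $Y + \mathcal B(n-y, \tfrac{p(1-q)}{1-pq})$. The only (minor) difference is that you justify the marginal via a thinning argument, whereas the paper simply cites it as well-known.
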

\begin{proof}
Via Bayes' rule,
\[\Pr(X=x | Y = y) = \frac{\Pr(Y = y | X = x) \Pr(X = x)}{\Pr(Y = y)}\]
It is well-known that $Y \sim \mathcal B(n, pq)$. Hence, using the formulae for the distributions and simplifying,
\begin{align*}
    \Pr(X = x | Y = y) &= \frac{{x \choose y} q^y (1-q)^{x-y} {n \choose x} p^x (1-p)^{n-x}}{{n \choose y} (pq)^y (1-pq)^{n-y}}\\
    &= {n-y \choose x-y} \frac{(p(1-q))^{x-y} (1-p)^{n-x}}{(1-pq)^{n-y}}\\
    &= {n-y \choose x-y} \left(\frac{p(1-q)}{1-pq}\right)^{x-y} \left(\frac{1-p}{1-pq}\right)^{n-x}\\
    &= {n-y \choose x-y} \left(\frac{p(1-q)}{1-pq}\right)^{x-y} \left(1 - \frac{p(1-q)}{1-pq}\right)^{n-x}
\end{align*} Note that for $Z \sim \mathcal B(n - y, \tfrac{p(1-q)}{1-pq})$, we have
\[\Pr(X = x | Y = y) = \Pr(Z = x-y)\]
and so $X | Y \sim Y + Z$. 
\end{proof}

\noindent The next observation is useful: Exponentiating a random variable by a base close to one will increase its concentration.
\begin{lemma} \label{lemma:lognormal}
Let $X \sim \mathcal N(\mu_x, \sigma_x^2)$ be a normal variable, and let $Y = (1 + \beta)^X$. Then $Y$ is lognormal with \begin{align*}
    \E(Y) &= (1+\beta)^{\mu_x}(1+\beta)^{\ln(1+\beta)\sigma_x^2/2}\\
    \var{Y} &= ((1+\beta)^{\ln(1+\beta)\sigma_x^2} - 1)(1+\beta)^{2\mu_x}(1+\beta)^{\ln(1+\beta)\sigma_x^2}
\end{align*} 
\end{lemma}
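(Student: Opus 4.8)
The plan is to identify $Y$ as an ordinary lognormal random variable after rescaling, and then read off its first two moments from the moment generating function of the normal distribution. First I would write
\[
Y = (1+\beta)^X = \exp\!\bigl(X\ln(1+\beta)\bigr),
\]
so that $\ln Y = \ln(1+\beta)\,X$ is itself normal, with mean $\mu_x\ln(1+\beta)$ and variance $\ln^2(1+\beta)\,\sigma_x^2$; this is exactly the assertion that $Y$ is lognormal.

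Next, I would use the identity $\E[e^{tX}] = \exp(t\mu_x + \tfrac12 t^2\sigma_x^2)$, valid for all $t\in\R$ when $X\sim\mathcal N(\mu_x,\sigma_x^2)$. Setting $t = \ln(1+\beta)$ gives $\E[Y]$, and setting $t = 2\ln(1+\beta)$ gives $\E[Y^2]$. Converting back from base $e$ to base $1+\beta$ via $\exp(c\ln(1+\beta)) = (1+\beta)^c$ --- in particular $\exp(\ln^2(1+\beta)\sigma_x^2) = (1+\beta)^{\ln(1+\beta)\sigma_x^2}$ --- rewrites $\E[Y]$ in the stated form. Finally, $\var(Y) = \E[Y^2] - \E[Y]^2$; factoring the common term $(1+\beta)^{2\mu_x}(1+\beta)^{\ln(1+\beta)\sigma_x^2}$ out of the difference yields the claimed expression for the variance.

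There is no real obstacle here: the lemma is a direct computation, and the only thing to watch is the bookkeeping of exponents when passing between bases $e$ and $1+\beta$. The content of the lemma is qualitative and is what the surrounding discussion uses: since $\ln(1+\beta)$ is small for small $\beta$, the multiplicative factor $(1+\beta)^{\ln(1+\beta)\sigma_x^2}$ controlling $\var(Y)$ relative to $\E[Y]^2$ is close to $1$, so exponentiating a Gaussian by a base near one leaves a tightly concentrated variable --- precisely the property invoked when arguing that plasticity-amplified synaptic inputs remain concentrated around their typical value.
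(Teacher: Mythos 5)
Your proposal is correct and follows essentially the same route as the paper: both identify $\ln Y = \ln(1+\beta)X$ as Gaussian and then read off the mean and variance of the resulting lognormal (your MGF evaluation at $t=\ln(1+\beta)$ and $t=2\ln(1+\beta)$ is exactly the computation underlying the standard lognormal moment formulas the paper quotes), with the same base-change bookkeeping giving the stated expressions.
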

In particular, for $\ln(1+\beta)\sigma_x^2$ close to $0$, $Y$ is highly concentrated at $(1+\beta)^{\mu_x}$.
\begin{proof}
Observe that $Y = e^{\ln(1+\beta)X}$, so it is clearly lognormal. So, define $\tilde X = \ln(1+\beta)X \sim \mathcal N(\ln(1+\beta)\mu_x, \ln(1+\beta)^2\sigma_x^2)$. Then we have \begin{align*}
    \E(Y) &= \exp\left(\E(\tilde X) + \frac{1}{2}\var{\tilde X}\right)\\
    &= \exp\left(\ln(1+\beta)\mu_x + \frac{1}{2}\ln(1+\beta)^2\sigma_x^2\right)\\
    &= (1+\beta)^{\mu_x}(1+\beta)^{\ln(1+\beta)\sigma_x^2/2}\\
\end{align*} and \begin{align*}
    \var{Y} &= \left(\exp\left(\var{\tilde X}\right) - 1\right)\exp\left(2\E(\tilde X) + \var{\tilde X}\right)\\
    &= \left(\exp\left(\ln(1+\beta)^2\sigma_x^2\right) - 1\right)\exp\left(2\ln(1+\beta)\mu_x + \ln(1+\beta)^2\sigma_x^2\right)\\
    &= ((1+\beta)^{\ln(1+\beta)\sigma_x^2} - 1)(1+\beta)^{2\mu_x}(1+\beta)^{\ln(1+\beta)\sigma_x^2}
\end{align*} Furthermore, if $\ln(1+\beta)\sigma_x^2 \approx 0$, then $(1+\beta)^{\ln(1+\beta)\sigma_x^2} \approx 1$ and we obtain the concentration.
\end{proof}

\noindent For learning a linear threshold function with an assembly (Theorem \ref{theorem:halfspace}), we will require an additional lemma. 
\begin{lemma}\label{lemma:dotprod}
Let $X_1, \ldots, X_n$ and $Y_1, \ldots, Y_n$ be independent Bernoulli variables, and let $Z = \sum_{j=1}^n X_i Y_i$. Then for any $t \ge 0$, \[ \E(Y_i \vert Z \ge \E(Z) + t) \ge \E(Y_i) \]
\end{lemma}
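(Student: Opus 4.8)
<br>

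\textbf{Proof plan.} The plan is to prove this by a one-coordinate correlation argument: $Y_i$ and the indicator of the event $\{Z \ge \E(Z) + t\}$ are both non-decreasing functions of the independent Bernoulli coordinates, hence positively correlated, and positive correlation of $Y_i$ with that indicator is exactly the assertion once we divide through by the probability of the conditioning event. Concretely, write $s = \E(Z) + t$ and assume $\Pr(Z \ge s) > 0$ (otherwise the conditional expectation is vacuous and there is nothing to prove). Since $\E(Y_i \mid Z \ge s) = \E(Y_i \mathbf{1}[Z \ge s]) / \Pr(Z \ge s)$ and $Y_i$ is $\{0,1\}$-valued, it suffices to show $\Pr(Z \ge s \mid Y_i = 1) \ge \Pr(Z \ge s)$; by Bayes' rule this is equivalent to $\Pr(Y_i = 1 \mid Z \ge s) \ge \Pr(Y_i = 1) = \E(Y_i)$.

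Rather than invoke FKG/Harris wholesale, I would give the direct argument, since only the single coordinate $Y_i$ is involved. Split off the $i$-th term: let $W = \sum_{j \ne i} X_j Y_j$, which is independent of the pair $(X_i, Y_i)$ by the independence of all $2n$ variables, so $Z = W + X_i Y_i$. Conditioned on $Y_i = 1$ we have $Z = W + X_i \ge W$ (because $X_i \ge 0$), while conditioned on $Y_i = 0$ we have $Z = W$; hence $\Pr(Z \ge s \mid Y_i = 1) \ge \Pr(Z \ge s \mid Y_i = 0)$. Since $\Pr(Z \ge s)$ is the convex combination $\E(Y_i)\Pr(Z \ge s \mid Y_i = 1) + (1 - \E(Y_i))\Pr(Z \ge s \mid Y_i = 0)$, it is at most the larger of the two terms, i.e. $\Pr(Z \ge s) \le \Pr(Z \ge s \mid Y_i = 1)$. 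Combined with the Bayes identity above, this gives $\E(Y_i \mid Z \ge s) \ge \E(Y_i)$.

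I do not expect a real obstacle here; the only points to handle with care are (i) the degenerate case $\Pr(Z \ge s) = 0$, where the conditional expectation should be declared undefined and the inequality read vacuously, and (ii) using the independence structure correctly — specifically that $W$ is independent of $(X_i, Y_i)$ \emph{jointly}, which is immediate. It is worth remarking that the proof never uses that the $X_j$ are Bernoulli, only that they are nonnegative and independent of everything else, and indeed that the same two lines work verbatim for $Z = f(X,Y)$ with $f$ coordinate-wise non-decreasing in $Y_i$ — the dot-product structure is used solely through the monotonicity $Z \ge W$ when $Y_i = 1$.
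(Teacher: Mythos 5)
Your proof is correct and follows essentially the same route as the paper: reduce via Bayes' rule to showing $\Pr(Z \ge \E(Z)+t \mid Y_i = 1) \ge \Pr(Z \ge \E(Z)+t)$, i.e.\ that the conditioning event and $\{Y_i=1\}$ are positively correlated. The only difference is that the paper simply asserts this positive correlation as an observation, whereas you justify it explicitly by splitting off $W=\sum_{j\ne i}X_jY_j$, using its independence from $(X_i,Y_i)$ and the convex-combination argument --- a welcome extra level of rigor, and your closing remark that only nonnegativity and monotonicity in $Y_i$ are used is accurate.
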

\begin{proof}
Bayes' rule gives \[\E(Y_i \vert Z \ge \E(Z) + t) = \Pr(Y_i=1 \vert Z \ge \E(Z) + t) = \frac{\Pr(Z \ge \E(Z) + t \vert Y_i = 1) \Pr(Y_i = 1)}{\Pr(Z \ge \E(Z) + t)}\] Then observe that the events $Z \ge \E Z + t$ and $Y_i = 1$ are positively correlated, and so 
\[\Pr(Z \ge \E Z + t \vert Y_i = 1) \ge \Pr(Z \ge \E Z + t)\]
Then substituting gives
\begin{align*}
    \E(Y_i \vert Z \ge \E(Z) + t) &\ge \frac{\Pr\left(Z \ge \E(Z) + t\right) \Pr(Y_i = 1)}{\Pr(Z \ge \E(Z) + t)}\\
    &= \E(Y_i)
\end{align*} as required.
\end{proof}

\noindent Lastly, the following lemma allows us to translate a bound on the weight between certain synapses into a bound on the number of rounds (or samples) required.

\begin{lemma} \label{lemma:nrounds}
Consider a neuron $i$, connected by a synapse to a neuron $j$ with weight initially 1, and equipped with a plasticity parameter $\beta$. Assume that $j$ fires with probability $p$ and $i$ fires with probability $q$ on each round, and that there are at least $T$ rounds, with \[T \geq \frac{1}{pq}\frac{\ln \gamma}{\ln(1+\beta)}\] Then the synapse will have weight at least $\gamma$ in expectation.
\end{lemma}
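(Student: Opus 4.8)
The plan is to track the weight of the single synapse $(j,i)$ directly over the $T$ rounds. Since the Hebbian rule multiplies $w_{ji}$ by $1+\beta$ exactly on those rounds where $j$ fired at the previous step and $i$ fired at the current step, and leaves the weight unchanged on every other round, after $T$ rounds we have $w_{ji}(T) = (1+\beta)^{N}$, where $N$ is the number of rounds on which this coincidence occurs. I would emphasize that this identity holds on every sample path, with no assumption on dependence between rounds, because each factor of the resulting product is either $1$ or $1+\beta$.

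Next I would compute $\E[N]$. On each round $j$ fires with probability $p$ and $i$ fires with probability $q$, and (within a round) these events are independent, so each round is a coincidence with probability $pq$; by linearity of expectation, $\E[N] = Tpq$. Note this step also needs no independence across rounds. Then I would apply Jensen's inequality to the convex map $x \mapsto (1+\beta)^{x}$ to get $\E[w_{ji}(T)] = \E[(1+\beta)^{N}] \ge (1+\beta)^{\E[N]} = (1+\beta)^{Tpq}$. Finally, plugging in the hypothesis $T \ge \frac{1}{pq}\frac{\ln\gamma}{\ln(1+\beta)}$ — and using $\gamma \ge 1$, the only regime of interest, so that $\ln\gamma \ge 0$ — yields $Tpq\ln(1+\beta) \ge \ln\gamma$, hence $(1+\beta)^{Tpq} \ge \gamma$, which is the claim.

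This argument is essentially immediate, so there is no serious obstacle; the only two points that take any care are (i) writing the weight as $(1+\beta)^{N}$ so that Jensen is applied to the \emph{sum} $N$ rather than to a product of possibly correlated factors, which is exactly what lets the proof dispense with any cross-round independence assumption, and (ii) observing that the bound on $T$ stated in the lemma is a (mildly loose) sufficient condition: if one instead assumes independence across rounds and computes $\E[w_{ji}(T)] = (1+pq\beta)^{T}$ exactly, the sharper requirement $T \ge \ln\gamma/\ln(1+pq\beta)$ suffices, and this follows from the stated hypothesis via the concavity estimate $\ln(1+pq\beta) \ge pq\ln(1+\beta)$. I would present the Jensen route as the main proof and mention the exact computation only as a remark.
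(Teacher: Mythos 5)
Your proof is correct. In fact the paper states Lemma \ref{lemma:nrounds} without giving any proof at all, so there is nothing to compare against; your argument is exactly the natural way to make the statement rigorous. Writing the weight pathwise as $(1+\beta)^{N}$ with $N$ the number of coincidence rounds, computing $\E[N]=Tpq$ by linearity, and then applying Jensen's inequality to the convex map $x\mapsto(1+\beta)^{x}$ to get $\E[(1+\beta)^{N}]\ge(1+\beta)^{Tpq}\ge\gamma$ is sound, and it correctly avoids any need for independence across rounds (only the per-round coincidence probability $pq$ is used). Your closing remark is also right: under cross-round independence the exact value is $(1+pq\beta)^{T}$, and the stated threshold on $T$ is sufficient for that sharper bound as well, via $\ln(1+pq\beta)\ge pq\ln(1+\beta)$.
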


We are now equipped to prove the theorems.

\subsection*{Proof of Theorem \ref{theorem:creation}} \label{appendix:creation}
Let $\mu_t$ be the fraction of first-timers in the cap on round $t$. The process stabilizes when $\mu_t < 1/k$, as then no new neurons have entered the cap. 

For a given neuron $i$, let $X(t)$ and $Y(t)$ denote the input from connections to the $k$ neurons in $\stimcore{A}$ and the $n-k$ neurons outside of $\stimcore{A}$, respectively, on round $t$. For a neuron which has never fired before, they are distributed approximately as \[ X(t) \sim \mathcal N(kpr, kpr) \quad Y(t) \sim \mathcal N(kpq, kpq) \] which follows from Lemma \ref{lemma:normalapprox},
for a total input of $X(t) + Y(t) \sim \mathcal N(kpr + kpq, kpr + kpq)$. (Note that we ignore small second-order terms in the variance.) To determine which neurons will make the cap on the first round, we need a threshold that roughly $k$ of $n$ draws from $X(1) + Y(1)$ will exceed, with constant probability. In other words, we need the probability that $X(1) + Y(1)$ exceeds this threshold to be about $k/n$. Taking $L = 2\ln(n/k)$ and using the tail bound in Lemma \ref{lemma:tail}, we find the threshold for the first cap to be at least \[C_1 = kp(r + q) + \sqrt{kp(r + q)L}\]

On subsequent rounds, there is additional input from connections to the previous cap, distributed as $\mathcal N(kp, kp(1-p))$. Using $\mu_t$ as the fraction of first-timers, a first-time neuron must be in the top $\mu_tk$ of the $n-k \sim n$ neurons left out of the previous cap. The activation threshold is thus \[C_t = kp(1+r) + kpq + \sqrt{kp(1+r + q)(L + 2\ln(1/\mu_t))}\]

Now consider a neuron $i$ which fired on the first round. We know that $X(1) + Y(1) \ge C_1$, so using Lemma \ref{lemma:bayes}, \[X(1) | (X(1) + Y(1) = C_1) \sim \mathcal N\left(\frac{r}{r + q}C_1, kp\frac{rq}{r+q} \right)\] If $X(1) = x$, Lemma \ref{lemma:coinflip} indicates that the true number of connections with stimulus neurons is distributed roughly as $\tilde X | (X(1) = x) \sim \mathcal N(x + (k - x)p(1-r), (k - x)p(1-r))$. Conditioning on $X(1) + Y(1) = C_1$, ignoring second-order terms, and bounding the variance as $kp$, we have \[\tilde X | (X(1) + Y(1) = C_1) \sim \mathcal N\left(kp(1-r) + \frac{r}{r + q}C_1, kp\right)\]

On the second round, the synapses between neuron $i$ and stimulus neurons which fired have had their weights increased by a factor of $1 + \beta$, and these stimulus neurons will fire on the second round with probability $r$. An additional $\tilde X - X(1)$ stimulus neurons have a chance to fire for the first time. Neuron $i$ also receives recurrent input from the $k$ other neurons which fired the previous round, which it is connected to with probability $p$. So, the total input to neuron $i$ is roughly \[\mathcal N\left((1 + \beta)\frac{r^2}{r+q}C_1 + kpr(1-r), kp\left(1 + \frac{rq}{r+q}\right)\right) + \mathcal N(kp(1+q), kp(1+q))\]

In order for $i$ to make the second cap, we need that its input exceeds the threshold for first-timers, i.e. \[(1+\beta)\frac{r^2}{r+q}C_1 + kp(1+ r(1-r)+q) + Z \geq C_2\] where $Z \sim \mathcal N\left(0, kp(1 + r + q)\right)$. Taking $\mu = \mu_2$, we have the following: \begin{align*}
    \Pr (i \in C_2 &| i \in C_1) = 1 - \mu\\
    &\geq \Pr \left(Z \geq C_2 - (1+\beta)\frac{r^2}{r+q}C_1 - kp(1+ r(1-r)+q)\right)\\
    % &\geq \Pr \left(Z \geq kp(1+r) + kpq + \sqrt{kp(1+r + q)(L + 2\ln(1/\mu))} - (1+\beta)\frac{r^2}{r+q}(kp(r + q) + \sqrt{kp(r + q)L}) - kp(1+ r(1-r)+q)\right)\\
    &\geq \Pr \left(Z \geq -\beta kpr^2 - (1+\beta)\frac{r^2}{\sqrt{r+q}}\sqrt{kpL} + \sqrt{kp(1+r + q)(L + 2\ln(1/\mu))}\right)
\end{align*}
Now, normalizing $Z$ to $\mathcal N(0, 1)$ we have (again by the tail bound) \[1 - \mu \geq 1 - \exp\left(-\frac{\left(\beta \sqrt{kp}r^2 + (1+\beta)\frac{r^2}{\sqrt{r+q}}\sqrt{L} + \sqrt{(1+r + q)(L + 2\ln(1/\mu))}\right)^2}{2(1+r+q)}\right)\] 
More clearly, this means \[\sqrt{2(1+r+q)\ln(1/\mu)} \leq \beta \sqrt{kp}r^2 + (1+\beta)\frac{r^2}{\sqrt{r+q}}\sqrt{L} + \sqrt{(1+r + q)(L + 2\ln(1/\mu))}\] 
Then taking \[\beta \geq \beta_0 = \frac{\sqrt{r+q}}{r^2}\frac{\left(\sqrt{1+r+q} - \frac{r^2}{\sqrt{r+q}}\right)\sqrt{L} + \sqrt{2(1 + r+q)}}{\sqrt{kp} + \sqrt{L}}\] gives $\mu \leq 1/e$, i.e. the overlap between the first two caps is at least a $1 - 1/e$ fraction.

Now, we seek to show that the probability of a neuron leaving the cap drops off exponentially the more rounds it makes it in. Suppose that neuron $i$ makes it into the first cap and stays for $t$ consecutive caps. Each of its connections with stimulus neurons will be strengthened by the number of times that stimulus neuron fired, roughly $\mathcal N(tr, tr(1-r))$ times. Using Lemma \ref{lemma:lognormal}, the weight of the connection with a stimulus neuron is highly concentrated around $(1+\beta)^{tr}$. Furthermore we know that $i$ has at least $\tilde X | (X(1) + Y(1) = C_1) \sim \mathcal N\left(kp(1-r) + \frac{r}{r + q}C_1, kp\right)$ such connections, of which $\mathcal N\left(kpr(1-r) + \frac{r^2}{r + q}C_1, kpr\right)$ will fire. So, the input to neuron $i$ will be at least \[(1+\beta)^{tr}(kpr(1-r) + \frac{r^2}{r + q}C_1) + kp(1+q) + Z\] where $Z \sim \mathcal N(0, kp(1 + (1+\beta)^{2tr}r+q))$

To stay in the $(t+1)$th cap, it suffices that this input is greater than $C_{t+1}$, the threshold for first-timers. Using $\mu = \mu_{t+1}$ and reasoning as before: \begin{align*}
    \Pr (i \in C_{t+1} &| i \in C_1 \cap \ldots \cap C_t) = 1 - \mu\\
    &\geq \Pr \left(Z > C_{t+1} - (1+\beta)^{tr}\left(kpr(1-r) + \frac{r^2}{r + q}C_1\right) - kp(1+q)\right)\\
    &= \Pr \left(Z > -t\beta kpr - (1+tr\beta)\frac{r^2}{\sqrt{r+q}}\sqrt{kpL} + \sqrt{kp(1+r + q)(L + 2\ln(1/\mu))}\right)\\
    &\geq 1 - \exp\left(-\frac{\left(tr\beta \sqrt{kp} + (1+tr\beta)\frac{r^2}{\sqrt{r+q}}\sqrt{L} - \sqrt{(1+r + q)(L + 2\ln(1/\mu))}\right)^2}{2(1+r+q)}\right)
\end{align*} where in the last step we approximately normalized $Z$ to $\mathcal N(0, 1)$.
Then \[\beta \geq \frac{1}{tr^2}\frac{\sqrt{(1+r+q)(L + 2t^2)} - r^2)\sqrt{L} + t\sqrt{2}}{\sqrt{kp} + \sqrt{L}}\] will ensure $\mu \leq e^{-t^2}$, which is no more than $\beta_0$.

Now, let neuron $i$ be a first time winner on round $t$. Let $X \sim \mathcal N(kpr, kpr)$ denote the input from stimulus neurons, $Y \sim \mathcal N(kp, kp)$ the input from recurrent connections to neurons in the previous cap, and $Z \sim \mathcal N(kpq, kpq)$ the input from nonstimulus neurons. Then conditioned on $X + Y + Z = C_t$, the second lemma indicates that \begin{align*}
    X | (X + Y + Z = C_t) &\sim \mathcal N\left(\frac{r}{1 + r + q}C_t, kp\frac{r(1+q)}{1 + r + q}\right)\\
    Y | (X + Y + Z = C_t) &\sim \mathcal N\left(\frac{1}{1 + r + q}C_t, kp\frac{r+q}{1 + r+q}\right)
\end{align*} So, the input on round $t+1$ is at least 
\[(1+\beta)\frac{1 - \mu_t + r^2}{1 + r + q}C_t + kpr(1-r) + kp\mu_t + kpq + Z\] where $Z \sim \mathcal N\left(0, kp\left((1+\beta)^2\frac{r^2(1+q) + (1-\mu_t)(r+q)}{1 + r + q} + r(1-r) + q\right)\right)$. By the usual argument we have \begin{align*}
    \Pr &(i \in C_{t+1} | i \in C_t) = 1 - \mu_{t+1}\\
    &\geq \Pr \left(Z \geq C_{t+1} - (1+\beta)\frac{1 - \mu_t + r^2}{1 + r + q}C_t - kpr(1-r) - kp\mu_t - kpq\right)
    % &= \Pr \left(Z \geq kp(1+r) + kpq + \sqrt{kp(1+r + q)(L + 2\ln(1/\mu_{i+1}))} - (1+\beta)\frac{1 - \mu_i + r^2}{1 + r + q)}(kp(1+r) + kpq + \sqrt{kp(1+r + q)(L + 2\ln(1/\mu_i))}) - kpr(1-r) - kp\mu_i - kpq\right)
    % &= \Pr \left(Z \geq -(1 - \mu_t + r^2)\left(\beta kp + (1+\beta)\frac{\sqrt{kp(L + 2\ln(1/\mu_t))}}{\sqrt{1 + r + q}}\right) + \sqrt{kp(1+r + q)(L + 2\ln(1/\mu_{t+1}))} \right)\\
    % &\geq 1 - \exp\left(-\left((1 - \mu_i + r^2)\left(\beta \sqrt{kp} + (1+\beta)\frac{\sqrt{L + 2\ln(1/\mu_i)}}{\sqrt{1 + r + q}}\right) - \sqrt{(1+r + q)(L + 2\ln(1/\mu_{i+1}))}\right)^2/2\right)
\end{align*} 
% Then rearranging as before, \[\sqrt{2\ln(1/\mu_{t+1})} \leq (1 - \mu_t + r^2)\left(\beta \sqrt{kp} + (1+\beta)\frac{\sqrt{L + 2\ln(1/\mu_t)}}{\sqrt{1 + r + q}}\right) - \sqrt{(1+r + q)(L + 2\ln(1/\mu_{t+1}))}\] 
So, we will have $\mu_{t+1} < e^{-1}\mu_t $ when \[\beta \geq \frac{1}{1 - \mu_t + r^2}\frac{\frac{r(1-r) + q + \mu_t}{\sqrt{1 + r + q}}\sqrt{L + 2\ln(1/\mu_t)} + \sqrt{2\ln(1/\mu_t)}}{\sqrt{kp} + \sqrt{\frac{L + 2\ln(1 / \mu_t)}{1 + r + q}}}\] which is smaller than $\beta_0$. 
Assuming $r+q \sim 1$, we may simplify $\beta_0$, so that \[\beta_0 = \frac{1}{r^2}\frac{\left(\sqrt{2} - r^2\right)\sqrt{L} + \sqrt{6}}{\sqrt{kp} + \sqrt{L}}\]

So, if $\beta \geq \beta_0$, the probability of leaving the cap once in the cap $t$ times drops off exponentially. We can conclude that no more than $\ln(k)$ rounds will be required for convergence. Additionally, assuming that a neuron enters the cap at time $t$, let $1 - p_\tau$ denote the probability it leaves after $\tau$ rounds. Then its probability of staying in the cap on all subsequent rounds is \[\prod_{\tau \geq 1} p_\tau \geq \prod_{\tau \geq 1}\left(1 - \exp\left(-\tau^2\left(\frac{\beta}{\beta_0}\right)^2\right)\right) \geq 1 - \exp\left(-\left(\frac{\beta}{\beta_0}\right)^2\right)\] Thus, every neuron that makes it into the cap has a probability at least $1 - \exp(-(\beta/\beta_0)^2))$ of making every subsequent cap, so the total support of all caps together is no more than $ k / (1 - \exp(-(\beta/\beta_0)^2)))$ in expectation.
\hfill$\blacksquare$

\subsection*{Proof of Theorem \ref{theorem:recall}} \label{appendix:recall}
Let $\mu$ denote the fraction of newcomers in the cap. A neuron in $\assmcore{A}$ can expect an input of \[X_a = \gamma kpr + kpq + Z_a\] where $Z_a \sim \mathcal N(0, \gamma^2kpr + kpq)$, while neurons outside of $\assmcore{A}$ can expect an input of \[X = kpr + kpq + Z\] where $Z \sim \mathcal N(0, kpr + kpq)$. Then the threshold is roughly \[C_1 = kpr + kpq + \sqrt{kp(r+q)(L + 2\ln(1/\mu))}\] For a neuron $i$ in $\assmcore{A}$ to make the cap, it needs to exceed this threshold. We have \begin{align*}
    \Pr (i \in C_1 | i \in \assmcore{A}) &= 1- \mu\\
    &\geq \Pr (X_a \geq C_1)\\
    &= \Pr \left(Z_a \geq -(\gamma-1) kpr + \sqrt{kp(r+q)(L + 2\ln(1/\mu))}\right)
\end{align*} Applying the tail bound gives \[\sqrt{2\ln(1/\mu)} \leq (\gamma - 1) \frac{r}{\sqrt{r+q}} \sqrt{kp} - \sqrt{L + 2\ln(1/\mu)} \] so for $r+q \sim 1$ and 
% \[\gamma \geq 1 + \frac{r+q}{\sqrt{r}}\left(\sqrt{2} + \sqrt{L/kpr + 2}\right)\]
\[\gamma \geq 1 + \frac{1}{\sqrt{r}}\left(\sqrt{2} + \sqrt{L/kpr + 2}\right)\]
we will have $\mu \leq e^{-kpr}$.
\hfill$\blacksquare$

\subsection*{Proof of Theorem \ref{theorem:multiple}} \label{appendix:multiple}

Let $\nu_t$ be the fraction of neurons in $A^*$ included in the cap on round $t$, and let $\mu_t$ be the fraction of true first-timers. The input on the first round for first-timers will be $\mathcal N(kpr, kpr) + \mathcal N(kpq, kpq)$, while for neurons in $\assmcore{A}$ will be 
\[\mathcal N(\gamma \alpha kpr, \gamma^2 \alpha kpr) + \mathcal N((1-\alpha)kpr, (1-\alpha)kpr) + \mathcal N(kpq, kpq)\] 
For a neuron not in $\assmcore{A}$ to make the cap, it needs to be in the top $(1 - \nu_1)k$ of $n - k \sim n$ draws. Thus, the threshold is at least \[C_1 = kp(r+q) + \sqrt{kp(r+q)(L - 2\ln(1-\nu_1))}\]
For a neuron in $\assmcore{A}$ to make the cap, it needs to exceed this threshold. Thus, we have \begin{align*}
    \Pr (i \in C_1 | i \in \assmcore{A}) &= \nu_1\\
    &\leq \Pr (Z > C_1 - \gamma \alpha kpr - (1 - \alpha)kpr - kpq)\\
    &= \Pr \left(Z > - (\gamma -1) \alpha kpr + \sqrt{kp(r+q)(L - 2\ln(1-\nu_1))}\right)\\
    &\leq \exp\left(-(\sqrt{(r+q)(L - 2\ln(1-\nu_1))} - (\gamma -1) \alpha r\sqrt{kp})^2/2 \right)
\end{align*} Rearranging we have \[\sqrt{2\ln(1/\nu_1)} \leq \sqrt{(r+q)(L - 2\ln(1-\nu_1))} - (\gamma -1) \alpha r\sqrt{kp}\] Thus, so as long as \[\gamma \leq 1 + \frac{\sqrt{(r+q)L} - \sqrt{2\ln((1+r)/r\alpha)}}{\alpha r \sqrt{kp}}\] we will have $\nu_1 \leq r\alpha/(1+r)$. 
On any round $t$ after the first, a neuron in $\stimcore{A} \setminus C_{t-1}$ will receive \[(1-\alpha)kpr + \gamma\alpha kpr + \gamma \nu_{t-1} kp + (1 - \nu_{t-1})kp + kpq\] while the threshold for first-timers is at least \[kp(1 + r + q) + \sqrt{kp(1+r+q)(L+2\ln(1/\mu_t))}\] where $\mu_t \leq e^{-t^2\beta/\beta_0}$, as in the proof of Theorem 1.
The neurons in $\stimcore{A}$ which make the cap on round $t+1$ consist of those that made the previous cap and this one, and those that are first-timers. From Theorem 1, we know $\Pr (i \in C_{t+1} | i \in C_t) \geq 1 - e^{-t^2}$, so take $\Pr (i \in C_{t+1} | i \in \stimcore{A} \cap C_t) \sim \nu_t$. We need only to find $\Pr (i \in C_{t+1} | i \in \stimcore{A} \setminus C_t) = \nu_t'$. On the second round, we have $\nu_1 = \frac{r\alpha}{1+r}$. So, letting $Z\sim \mathcal N(0, kp(\gamma^2 - 1)\alpha(1+r) + kpq)$, it follows that: \begin{align*}
    \Pr (i \in C_2 | i \in \stimcore{A} \setminus C_1)&= \nu_2'\\
    &\leq \Pr \left(Z > -(\gamma - 1)\frac{\alpha(2+r)}{1+r})kpr + \sqrt{kp(1+r+q)(L)}  \right)\\
    &\leq \exp\left(-\frac{1}{2}\left( \sqrt{(1+r+q)(L)} - \frac{\alpha r(2+r)}{1+r})\sqrt{kp} \right)^2 \right)
\end{align*} and so if \[\gamma \leq 1 + \frac{\sqrt{(1+r+q)L} - \sqrt{2\ln((1+r)/\alpha)}}{\alpha \frac{r^2+2r}{1+r}\sqrt{kp}}\] will ensure that $\nu_2' \leq \alpha/(1+r)$, which is very nearly the bound for the first cap. Thus, no more than an $\alpha$ fraction of neurons in the second cap are in $\stimcore{A}$.

Now, we seek to show that if $\nu_t \leq \alpha$, then we will have $\nu_{t+1} \leq \alpha + 1/k$, since this will ensure that no new neurons from $\stimcore{A}$ have entered the cap. Reasoning as before, let $\Pr (i \in C_{t+1} | i \in \stimcore{A} \setminus C_t) = \nu_{t+1}'$, and then we have \begin{align*}
    \Pr (i \in C_{t+1} | i \in \stimcore{A} \setminus C_t) &= \nu_{t+1}'\\
    &\leq \Pr \left(Z > -2(\gamma - 1)\alpha kpr + \sqrt{kp(1+r+q)(L+2\ln(1/\mu_t))}\right)\\
    &\leq \exp\left(-\left(\sqrt{(1+r+q)(L+2\ln(1/\mu_t))} - 2(\gamma - 1)\alpha r\sqrt{kp}\right)^2/2 \right)
\end{align*} Then solving for $\gamma$, we find that we will have $\nu_t' \leq 1/k$ as long as \[\gamma \leq 1 + \frac{\sqrt{(1+r+q)(L+2t^2)} - \sqrt{2\ln(k)}}{2\alpha r\sqrt{kp}}\] which is the least upper bound so far. Thus, taking $r+q\sim 1$, so long as \[\gamma \leq 1 + \frac{\sqrt{L} - \sqrt{2\ln((1+r)/r\alpha)}}{\alpha r \sqrt{kp}}\] the overlap of any cap with $A^*$ will never exceed $\alpha k$ neurons. By Theorem 1, an assembly $B$ will form with high probability after $\ln(k)$ examples, and we conclude that $|\assmcore{A} \cap \assmcore{B}| \le \alpha k$.
\hfill$\blacksquare$

\subsection*{Proof of Theorem \ref{theorem:classify}} \label{appendix:classify}

Let $\mu$ be the fraction of first-timers included in the cap $C_1$, and $\nu$ be the fraction of neurons in $\assmcore{A}$ in the cap. A neuron in $\assmcore{B}$ receives \[X_b = \gamma kpr + kpq + Z_b\] where $Z_b \sim \mathcal N(0, \gamma^2 kpr + kpq)$ while a neuron in $\assmcore{A}$ receives \begin{align*}
    X_a &= \gamma \alpha kpr + \gamma (1-\alpha) kp\left(\frac{qk}{n}\right) + kpq + Z_a\\ 
    &= \gamma \alpha kpr + kpq + Z_a + O(1)
\end{align*} where $Z_a \sim \mathcal N(0, \gamma^2 \alpha kpr + kpq)$. The threshold to be in the top $\nu k$ of $|\assmcore{A}| \sim k$ draws from this distribution is \[C_1 = \gamma \alpha kpr + kpq + \sqrt{2kp(\gamma^2\alpha r+ q)\ln(1/\nu)}\] Neurons in $\assmcore{B}$ will make the first cap if they exceed this threshold. So, we have \begin{align*}
    \Pr (i \in C_1 | i \in \assmcore{B}) &= 1 - \nu\\
    &\geq \Pr (X_b \geq C_1)\\
    &= \Pr \left(Z_b \geq -\gamma(1- \alpha) kpr + \sqrt{2kp(\gamma^2\alpha r+ q)\ln(1/\nu)} \right)\\
    &\ge 1 - \exp\left(-\frac{1}{2}\frac{\left(\gamma(1- \alpha) r\sqrt{kp} - \sqrt{2(\gamma^2\alpha r+ q)\ln(1/\nu)}\right)^2}{\gamma^2r + q}\right)
\end{align*}
where the last step follows from Lemma \ref{lemma:tail}. Solving for $\nu$ gives \[\nu \leq \exp\left(-\frac{(1- \alpha)^2 kpr}{2(1+\alpha)}\right) \]
Similarly, neurons in neither of the two assembly cores must also exceed the threshold to make the cap, which means \begin{align*}
    \Pr (i \in C_1 | i \not\in \assmcore{A} \cup \assmcore{B}) &= \mu\\
    &\leq \Pr (X > C_1)\\
    &= \Pr \left(Z > (\gamma \alpha - 1) kpr + \sqrt{2kp(\gamma^2\alpha r+ q)\ln(1/\nu)}\right)\\
    &\leq \exp\left(-\frac{1}{2(r+q)}\left((\gamma \alpha - 1) r\sqrt{kp} + \sqrt{2(\gamma^2\alpha r+ q)\ln(1/\nu)} \right)^2\right)\\
    &\le \exp\left(-\frac{1}{2}(\gamma \alpha - 1)^2 kpr \right) \cdot \nu^{\gamma^2 \alpha}
\end{align*}
Then the fraction of neurons in $C_1 \setminus \assmcore{B}$ will be 
% \[\nu + \mu \leq \exp\left(-\frac{(1- \alpha)^2 kpr}{2(1+\alpha)}\right) + \exp\left(-\frac{1}{2}(\gamma \alpha - 1)^2 kpr \right)\exp\left(-\gamma^2\alpha\frac{(1- \alpha)^2 kpr}{2(1+\alpha)}\right) \]
\[\nu + \mu \leq \exp\left(-\frac{(1- \alpha)^2 kpr}{2(1+\alpha)}\right) + \exp\left(-\frac{1}{2}(\gamma \alpha - 1)^2 kpr \right) \] \hfill$\blacksquare$

\subsection*{Proof of Theorem \ref{theorem:halfspace}} \label{appendix:halfspace}
For each round $1, \ldots, t, \ldots$, define $\mu_t$ to be the fraction of neurons firing for the first time on that round, and let $X(t)$ be an example sampled from $\mathcal D_+$. For each neuron $i$, let $W_j^i(t)$ represent the weight of the synapses between neuron $i$ and input neuron $j$ on round $t$. Since each synapse is present independently with probability $p$, the number of synapses (i.e. the norm $\|W^i\|_0$) between the input region and neuron $i$ is a binomial random variable, with expectation $np$. Furthermore, the Chernoff bound shows that the number of synapses is sharply concentrated about its mean as \[\Pr(|\|W^i\|_0 - np | \ge n^{1-\epsilon}p) \le 2\exp\left(-\frac{n^{1-2\epsilon}p}{3}\right)\] for any $\epsilon > 0$. Thus, once normalized, the weight of a particular synapse $W_j^i(0)$ will be between, say, $\frac{1}{np-n^{1/3}p}$ and $\frac{1}{np+n^{1/3}p}$ with high probability. Since both of the quantities approach $\frac{1}{np}$ for large enough $n$, we will regard the weight of every synapse at the outset as $\frac{1}{np}$, so that $\E W_j^i(0) = \frac{p}{np} = \frac{1}{n}$.

If $i$ has not yet fired, then the round $t$ input $W^i(t) \cdot X(t)$ is approximately normal, with mean \[\E(W^i(t) \cdot X(t)) = \sum_{j=1}^n \E(W_j^i(t)) \E(X_j(t)) \ge \sum_{j=1}^n \frac{1}{n} \left(\frac{k}{n} + \Delta v_j\right) = \frac{k}{n} + \frac{\Delta}{n} \|v\|_1\]
and variance approximately $\frac{k}{n^2p}$. So, using Lemma \ref{lemma:tail}, a neuron will be in the top $\mu_t k$ of the approximately $n$ neurons which have never fired before if its input from $X(1)$ exceeds 
\[C_1 = \frac{k}{n} + \frac{\Delta}{n} \|v\|_1 + \frac{\sqrt{kpL}}{np}\] 
where $L = 2\ln(n/k)$, and including input from the previous cap $N(k/n, k/n^2p)$ on subsequent rounds,
\[C_t = \frac{2k}{n} + \frac{\Delta}{n} \|v\|_1 + \frac{1}{np}\sqrt{2kp(L + 2\ln(1/\mu_t))}\]
Now, let $i$ be a neuron which made the first cap. Then $W^i(1) \cdot X(1) \ge C_1$ and each nonzero component in $W^i(1)$ will be increased by a factor of $1+\beta$ if the corresponding component of $X(1)$ was nonzero as well. By Lemma \ref{lemma:dotprod}, the conditional expectation of $W_j^i(2)$ will be 
\begin{align*}
    \E(W_j^i(2) \vert W^i(1) \cdot X(1) \ge C_1) &\ge \left(1 + \beta \Pr(X(1)=1)\right) \E(W_j^i(1))\\
    &\ge \frac{1}{n} + \beta \frac{1}{n}\left(\frac{k}{n} + \Delta v_j\right)
\end{align*}
Then we have \begin{align*}
    \E(W^i(2) \cdot X(2) &\vert i \in C_1) = \sum_{j=1}^n \E(W_j^i(2) \vert i \in C_1) \E(X_j(2))\\
    &\ge \sum_{j=1}^n \frac{1}{n}\left(\frac{k}{n} + \Delta v_j\right) + \beta \frac{1}{n}\left(\frac{k}{n} + \Delta v_j\right)^2\\
    &\ge \frac{k}{n} + \frac{\Delta}{n}\|v\|_1 + \frac{\beta\Delta^2}{n}
\end{align*}
Then letting $Y \sim \mathcal N(k/n, k/n^2p)$ denote the input from the previous cap, we have \begin{align*}
    \Pr(i \in C_2 \vert i \in C_1) &= 1 - \mu_2\\
    &\ge \Pr(W^i(2) \cdot X(2) + Y \ge C_2)
\end{align*} Taking $Z \sim \mathcal N(0, k/n^2p)$ to be an underestimate of the variance, \begin{align*}
    1 - \mu_2 &\ge \Pr\left(Z \ge \frac{1}{np}\sqrt{2kp(L + 2\ln(1/\mu_2))} - \frac{\beta\Delta^2}{n}\right)\\
    &\ge 1 - \exp\left(-\frac{1}{2kp}\left(\beta\Delta^2p - \sqrt{2kp(L + 2\ln(1/\mu_2))}\right)^2\right)
\end{align*} following from the tail bound in Lemma \ref{lemma:tail}. Then so as long as 
\[ \Delta^2\beta \ge \frac{\sqrt{2k(L + 2)} + \sqrt{2k}}{\sqrt{p}} \] we will have $\mu_2 \le 1/e$.

Now, suppose $i$ fired on all of the first $t$ rounds. Input neuron $j$ is expected to fire at least $(\frac{k}{n} + \Delta v_i)t$ times, so by Lemma \ref{lemma:lognormal} the weight of an extant synapse $W_j^i(t+1)$ will be concentrated about its mean, which is at least $\frac{1}{np}(1+\beta)^{(\frac{k}{n} + \Delta v_i)t}$. The expected input $W^i(t+1) \cdot X(t+1)$ is thus \begin{align*}
    \E(W^i(t+1) \cdot X(t+1) &| i \in C_1 \cap \ldots \cap C_t) = \sum_{j=1}^n \E(W_j^i(t+1) \vert i \in C_1 \cap \ldots \cap C_t) \hat\E(X_j(t+1))\\
    &\ge \sum_{j=1}^n \frac{1}{n}\left(\frac{k}{n} + \Delta v_j\right)  + \beta t \frac{1}{n}\left(\frac{k}{n} + \Delta v_j\right)^2\\
    &\ge \frac{k}{n} + \frac{\Delta}{n} \|v\|_1 + \frac{\Delta^2}{n} \beta t
\end{align*} Then including recurrent input $Y \sim \mathcal N(kp, kp)$, we have \begin{align*}
    \Pr(i \in C_{t+1} \vert i \in C_1 \cap \ldots \cap C_t) &= 1 - \mu_{t+1}\\
    &\ge \Pr(W^i(t+1) \cdot X(t+1) + Y \ge C_{t+1})
\end{align*} Again taking $Z \sim \mathcal N(0, k/n^2p)$ to be the variance, we have \begin{align*}
    1 - \mu_{t+1} &\ge \Pr\left(Z \ge \frac{1}{np}\sqrt{2kp(L + 2\ln(1/\mu_{t+1}))} - \frac{\Delta^2}{n} \beta t \right)\\
    &\ge 1 - \exp\left(-\frac{1}{2kp}\left(\Delta^2 \beta tp - \sqrt{2kp(L + 2\ln(1/\mu_{t+1}))}\right)^2\right)
\end{align*} again following from the tail bound. Then we will have $\mu_{t+1} \le e^{-t}$ as long as \[ \Delta^2 \beta \ge \frac{\sqrt{2k(L+2t)} + \sqrt{2kt}}{t\sqrt{p}} \] which is certainly no more than $\beta_0$.

So, if $\beta \ge \beta_0$, the probability of a neuron that made the cap $t$ times missing the next one drops off exponentially. It follows that after $\ln(k)$ rounds the process will have converged, and each neuron in a particular cap has a probability of at least $1 - \exp(-(\beta/\beta_0)^2))$ of making every subsequent cap. So, the total support of all caps together is no more than $ k / (1 - \exp(-(\beta/\beta_0)^2))) = k + o(k)$ in expectation.

Now, suppose the training process continues until the cap converges to some set $\assmcore{A}$ of size $k + o(k)$, which takes roughly $\ln(k)$ rounds with high probability. For neuron $i \in \assmcore{A}$, the weight of an extant synapse $W_j^i(\ln(k))$ will be, in expectation, \[W_j^i(\ln(k)) = \frac{1}{np}(1 + \beta)^{\left(\frac{k}{n} + \Delta v_j\right) \ln(k)}\] and on average over the neurons in $\assmcore{A}$, we will have \[\sum_{j=1}^n W_j^i \ge \sum_{j=1}^n \frac{p}{np}(1 + \beta)^{\left(\frac{k}{n} + \Delta v_j\right) \ln(k)} \ge 1 + \frac{\beta \ln(k)}{n} \Delta \|v\|_1 \] The neurons are then allowed to return to rest, and each neuron renormalizes the weights of its incoming synapses to sum to 1, so that \[W_j^i = \frac{(1 + \beta)^{\left(\frac{k}{n} + \Delta v_j\right) \ln(k)}}{np + \beta p \ln(k) \Delta \|v\|_1} + o(n^{-1})\]
Choose examples $X^+$ and $X^-$ at random, so that \[v^\top X^+ \ge \frac{k}{n} \|v\|_1 + \Delta \quad\quad v^\top X^- \le \frac{k}{n} \|v\|_1 - \Delta\] Letting $C_+, C_-$ denote the caps formed after presenting the respective example once, define $\epsilon_+$ to be the fraction of neurons in $C_+$ formed in response to $X^+$ which are not in $\assmcore{A}$, and $\epsilon_-$ to be the fraction of neurons in $C_-$ which are in $\assmcore{A}$, i.e. \[\epsilon_+ = \frac{|C_+ \setminus \assmcore{A}|}{k} \quad\quad \epsilon_- = \frac{|C_- \cap \assmcore{A}|}{k}\]
As before, the input for a neuron $i$ outside of $\assmcore{A}$ will be nearly normal with expectation
\begin{align*}
    \E(W^i \cdot X^+) &= \sum_{j=1}^n \E(W_j^i) \E(X_j^+)\\
    &\ge \sum_{j=1}^n \frac{1}{n} \left( \frac{k}{n} + \Delta v_j \right)\\
    &= \frac{k}{n} + \frac{\Delta}{n} \|v\|_1
\end{align*} and variance nearly $Z \sim \mathcal N(0, k/n^2p)$, so that the threshold to make the top $\epsilon_+ k$ neurons will be at least \[C_+ = \frac{k}{n} + \frac{\Delta}{n} \|v\|_1 + \frac{1}{np} \sqrt{kp(L + 2\ln(1/\epsilon_+))}\]
Now, consider $i \in \assmcore{A}$. Its input in expectation will be \begin{align*}
    \E(W^i \cdot X^+ \vert i \in \assmcore{A}) &= \sum_{j=1}^n \E(W_j^i) \E(X_j^+)\\
    &\ge \sum_{j=1}^n \frac{\frac{k}{n} + \Delta v_j}{np + \beta p \ln(k) \Delta \|v\|_1}  + \frac{\beta \ln(k)\left(\frac{k}{n} + \Delta v_j\right)^2}{np + \beta p \ln(k) \Delta \|v\|_1}\\
    &\ge \frac{k + \Delta \|v\|_1 + \beta \ln(k)\Delta^2}{np + \beta p \ln(k) \Delta \|v\|_1}
\end{align*} while its variance will be \begin{align*}
    \var(W^i \cdot X^+ \vert i \in \assmcore{A}) &= \E((W^i \cdot X^+)^2 \vert i \in \assmcore{A}) - \E(W^i \cdot X^+ \vert i \in \assmcore{A})^2\\
    &\ge \sum_{j=1}^n p\left(\frac{k}{n} + \Delta v_j\right) \frac{(1+\beta)^{(\frac{k}{n} + \Delta v_j) 2\ln(k)}}{(np + \beta \ln(k) p \Delta \|v\|_1)^2}\\
    &\ge \frac{k}{(n + \beta \ln(k) \Delta \|v\|_1)^2p}
\end{align*} Then letting $Z \sim \mathcal N(0, \frac{k}{(n + \beta \ln(k) \Delta \|v\|_1)^2p})$ denote the uncertainty, we have \begin{align*}
    \Pr(i \in C_+ &\vert i \in \assmcore{A}) = 1 - \epsilon_+\\
    &\ge \Pr(W^i \cdot X^+ \ge C_+)\\
    &\ge \Pr\left(Z \ge \beta \ln(k)  \frac{\frac{\Delta \|v\|_1}{n}\left(k + \Delta \|v\|_1\right) - \Delta^2}{n + \beta \ln(k) \Delta \|v\|_1} + \frac{1}{np} \sqrt{kp(L + 2\ln(1/\epsilon_+))}\right)
\end{align*} Then using Lemma \ref{lemma:tail}, \begin{align*}
    \epsilon_+ \le \exp\left(-\tfrac{1}{2kp}\left(p\beta \ln(k)  \left(\Delta^2 - \tfrac{\Delta \|v\|_1\left(k + \Delta \|v\|_1\right)}{n}\right) -  \tfrac{n + \beta \ln(k) \Delta \|v\|_1^2}{n} \sqrt{kp(L + 2\ln(1/\epsilon_+))}\right)^2\right)
\end{align*} So, for fixed $\epsilon_+$ we need \[\Delta^2 - \frac{\Delta \|v\|_1}{n}\left(k + \Delta \|v\|_1\right) \ge \frac{\left(1 + \beta \ln(k)\frac{\Delta \|v\|_1}{n}\right) \sqrt{kp(L + 2\ln(1/\epsilon_+))} + \sqrt{2kp\ln(1/\epsilon_+)}}{p\beta \ln(k)} \] Now, note that if $\Delta \ge \frac{2k}{\sqrt{n}}$, recalling that $\|v\|_1 \le \sqrt{n}/2$, we have must have $\Delta \ge \frac{k\|v\|_1}{n/2 - \|v\|_1^2}$, and so \begin{align*}
    \Delta^2 - \frac{\Delta \|v\|_1}{n}\left(k + \Delta \|v\|_1\right) &= \Delta^2 - \left(\Delta \frac{k\|v\|_1}{n/2 - \|v\|_1^2} \frac{n/2 - \|v\|_1^2}{n} + \frac{\Delta^2 \|v\|_1^2}{n}\right)\\
    &\ge \Delta^2 - \left(\frac{\Delta^2}{2} - \frac{\Delta^2 \|v\|_1^2}{n} + \frac{\Delta^2 \|v\|_1^2}{n} \right)\\
    &= \frac{\Delta^2}{2}
\end{align*} Additionally, if $\Delta \ge 2 \sqrt{\frac{k}{np}} \sqrt{L + 2\ln(1/\epsilon_+)}$ (a much milder bound compared to the previous one, for $p = \Omega(k^{-1})$ and $\epsilon_+$ fixed) then it suffices that \[\Delta^2 \beta \ge \frac{4\sqrt{k}}{\ln(k) \sqrt{p}} \left(\sqrt{L + 2\ln(1/\epsilon_+)} + \sqrt{2\ln(1/\epsilon_+)}\right) \]

Now, consider the example $X^-$. The input to a neuron $i$ outside of $\assmcore{A}$ will be nearly normal with expectation \begin{align*}
    \hat\E(W^i \cdot X^-) &= \sum_{j=1}^n \E(W_j^i) \E(X_j^-)\\
    &= \sum_{j=1}^n \frac{k}{n^2}\\
    &= \frac{k}{n}
\end{align*}
For $i \in \assmcore{A}$, \begin{align*}
    \E(W^i \cdot X^- \vert i \in \assmcore{A}) &= \sum_{j=1}^n \hat\E(W_j^i) \E(X_j^-)\\
    &\ge \sum_{j=1}^n \frac{(1 + \beta)^{(\frac{k}{n} + \Delta v_j)\ln(k)}}{\sum (1+\beta)^{(\frac{k}{n} + \Delta v_l)\ln(k)}} \frac{k}{n}\\
    &= \frac{k}{n}
\end{align*} with variance no larger than $\frac{k}{n^2p}$. It follows that the threshold for a neuron in $\assmcore{A}$ to make the top $\epsilon_-k$ of the $k$ neurons in $\assmcore{A}$ will be no more than 
\[C_- = \frac{k}{n} + \frac{\sqrt{2kp\ln(1/\epsilon_-)}}{np}\] So, for $Z \sim \mathcal N(0, k/n^2p)$ we have \begin{align*}
    \Pr(i \in C_- &\vert i \not\in \assmcore{A}) = (1 - \epsilon_-)\frac{k}{n}\\
    &\ge \Pr\left(W^i \cdot X^- \ge C_-)\right)\\
    &\ge \Pr\left(Z \ge \frac{\sqrt{2kp\ln(1/\epsilon_-)}}{np}\right)\\
    &= \epsilon_-
\end{align*} 
Rearranging shows that \[\epsilon_- \le \frac{\frac{k}{n}}{1 + \frac{k}{n}} \le \frac{k}{n}\] Now, comparing all of above bounds on $\Delta$ when $\epsilon_+ \le 1/e$: \begin{align}
    \Delta &\ge \frac{2k}{\sqrt{n}}\\
    \Delta &\ge 2\sqrt{\frac{k}{np}} \sqrt{L + 2}\\
    \Delta^2 \beta &\ge \frac{4\sqrt{k}}{\beta \ln(k) \sqrt{p}} \left(\sqrt{L + 2} + \sqrt{2}\right)\\
    \Delta^2 \beta &\ge \frac{\sqrt{2k(L + 2)} + \sqrt{2k}}{\sqrt{p}}
\end{align} Since $k \le n$, and assuming $\beta$ is no larger than a constant, the bound in (4) is the strongest. Thus, taking \[\Delta^2 \beta \ge \frac{\sqrt{2k(L + 2)} + \sqrt{2k}}{\sqrt{p}}\] and $\Delta \ge \sqrt{L + 2}$ is sufficient to ensure that the assembly creation process converges within $\ln(k)$ steps, and $\epsilon_+, \epsilon_- \le 1/e$. It follows that if more than half (resp. less than half) of the neurons in $\assmcore{A}$ fire in response to any given example, it is a positive (resp. negative) one with high probability. \hfill$\blacksquare$

\end{document}